\DeclareMathAlphabet{\pazocal}{OMS}{zplm}{m}{n}
\DeclareMathAlphabet\boldsymbolcal{OMS}{cmsy}{b}{n}
\newtheorem{theorem}{Theorem}
\newtheorem{definition}{Definition}
\newtheorem{lemma}{Lemma}
\newtheorem{proposition}{Proposition}
\newtheorem{remark}{Remark}
\providecommand{\nor}[1]{\left\lVert {#1} \right\rVert}
\providecommand{\scalT}[2]{\left\langle{#1},{#2}\right\rangle}
\def\bit{\begin{itemize}}
\def\eit{\end{itemize}}
\def\ben{\begin{enumerate}}
\def\een{\end{enumerate}}
\definecolor{dkgreen}{rgb}{0,0.6,0}
\definecolor{gray}{rgb}{0.5,0.5,0.5}
\definecolor{mauve}{rgb}{0.58,0,0.82}
\tiny\color{gray},
\title{Regularized Finite Dimensional Kernel Sobolev Discrepancy}
\author{
  Youssef Mroueh\\%\thanks{Use footnote for providing further
   % information about author (webpage, alternative
    %address)---\emph{not} for acknowledging funding agencies.} \\
 IBM Research\\
IBM T.J Watson Research Center\\
  \texttt{mroueh@us.ibm.com} \\
  %% examples of more authors
  %% \And
  %% Coauthor \\
  %% Affiliation \\
  %% Address \\
  %% \texttt{email} \\
  %% \AND
  %% Coauthor \\
  %% Affiliation \\
  %% Address \\
  %% \texttt{email} \\
  %% \And
  %% Coauthor \\
  %% Affiliation \\
  %% Address \\
  %% \texttt{email} \\
  %% \And
  %% Coauthor \\
  %% Affiliation \\
  %% Address \\
  %% \texttt{email} \\
}

\begin{document}
% \nipsfinalcopy is no longer used

\maketitle

\begin{abstract}
We show in this note that the Sobolev Discrepancy introduced in \cite{SobolevGAN}  in the context of generative adversarial networks, is actually the weighted negative Sobolev norm $\nor{.}_{\dot{H}^{-1}(\nu_q)}$, that is known to linearize the Wasserstein $W_2$ distance and plays a fundamental role in the dynamic formulation of optimal transport of Benamou and Brenier. Given a Kernel with \emph{finite dimensional feature map} we show that the Sobolev discrepancy can be approximated from finite samples. Assuming this discrepancy is finite, the  error depends on  the approximation error in the function space induced by the finite dimensional feature space kernel and on a statistical error due to the finite sample approximation. 
\end{abstract}

\section{Sobolev Discrepancy and Weighted Negative Sobolev Norms   }\label{sec:SobW2}
%\textcolor{blue}{Copy Form Independence}
%section{Sobolev Discrepancy and Negative Sobolev Spaces}
In this Section we review the Sobolev Discrepancy introduced in \cite{SobolevGAN}. 
Let $\pazocal{X}$ be  a compact space in $\mathbb{R}^d$ with lipchitz boundary $\partial \pazocal{X}$. 
We start by defining the Sobolev Discrepancy:
\begin{definition}[Sobolev Discrepancy \cite{SobolevGAN}. ] Let $\nu_{p},\nu_{q}$ be two measures defined on $\pazocal{X}$. We define the Sobolev Discrepancy as follows:
\begin{equation}
D: \pazocal{S}(\nu_{p},\nu_{q})= \sup_{f}\Big\{\mathbb{E}_{x\sim \nu_{p}}f(x)-\mathbb{E}_{x\sim \nu_{q}}f(x): f \in W^{1,2}_0(\pazocal{X},\nu_{q}) ,
 \mathbb{E}_{x\sim \nu_{q}}\nor{\nabla_xf(x) }^2 \leq 1\Big \} 
\label{eq:Sobolev}
\end{equation}
\vskip -0.2in
where $W^{1,2}_0(\pazocal{X},\nu_{q})= \{ f: \pazocal{X}\to \mathbb{R} , f \text{ vanishes at the boundary of } \pazocal{X} \text{ and }\mathbb{E}_{x\sim \nu_q} \nor{\nabla_xf(x)}^2< \infty\}$.
\end{definition}

We note here  that this Sobolev discrepancy is actually known and already studied in optimal transport and relates to the Wasserstein 2 distance and its dynamical form given by Benamou and Brenier \cite{Dynamictransport}. It is indeed defined through the weighted  negative Sobolev Norm :
\begin{definition}[Weighted Negative Sobolev Norm \cite{Villani,peyre2016comparison}]
For $\mu$ a positive measure on $\pazocal{X}$, For a signed measure $\chi$ on $\pazocal{X}$, the weighted negative Sobolev Norm is defined as follows:
$$\nor{\chi}_{\dot{H}^{-1}(\mu)}= \sup_{f, \int_{\pazocal{X}} \nor{\nabla_x f(x)}^2 d\mu(x)\leq 1}\left| \int_{\pazocal{X}}f(x)d\chi(x)\right|.$$
$\nor{\chi}_{\dot{H}^{-1}(\mu)}$ is the dual norm of the  weighted Sobolev semi-norm $\nor{f}_{\dot{H}(\mu)}=\int_{\pazocal{X}}\nor{\nabla_x f(x)}^2d\mu(x)$. This norm is finite for measures of zero total mass, and can be infinite.
\end{definition}

It follows therefore that the Sobolev discrepancy corresponds to the $\dot{H}^{-1}(\nu_q)$ norm:
\begin{equation}
\pazocal{S}(\nu_{p},\nu_{q})=\nor{\nu_p-\nu_q}_{\dot{H}^{-1}(\nu_q)}.
\label{eq:weightedSobolevNorm}
\end{equation}

Note that $\dot{H}^{-1}(\nu_q)$ is different from the negative Sobolev norm $\dot{H}^{-1}(\pazocal{X})$ defined as follows  for a signed measure $\chi$ (See for instance  \cite{santambrogio2015optimal} chapter 5 Section 5.5.2)
$$\nor{\chi}_{\dot{H}^{-1}(\pazocal{X})}=\sup_{f, \int_{\pazocal{X}}\nor{\nabla_x f(x)}^2 dx \leq 1} \left| \int_{\pazocal{X}}f(x)d\chi(x)\right|$$
Note the negative Sobolev norm defines the following distance between $\nu_p$ and $\nu_q$:
\begin{equation}
\nor{\nu_p-\nu_q}_{\dot{H}^{-1}(\pazocal{X})} = \sup_{f, \int_{\pazocal{X}}\nor{\nabla_x f(x)}^2 dx \leq 1}\int_{\pazocal{X}}f(x)d(\nu_p(x)-\nu_q(x)).
\label{eq:Negative SobolevNorm}
\end{equation}
Note that  negative Sobolev norms   relate to Energy distances and MMD (See for instance \cite{peyre2017computational} page 119)
\section{Dual  and Primal Formulations of the Sobolev Discrepancy }
Following \cite{santambrogio2015optimal} (Chapter 5 Section 5.5.2) that characterizes the solution of problem \ref{eq:Negative SobolevNorm} via a diffusion PDE,  we  characterize in this  Section the solution of Problem \eqref{eq:Sobolev} via an advection PDE. 
This connection to  advection PDE was already given  in \cite{SobolevGAN} and is actually also a known result in the optimal transport literature. We simplify here the proofs and  notations, and give a primal $\inf$ formulation   for the  $\sup$ ``dual'' formulation in \eqref{eq:Sobolev}. 

%\subsection{Primal Formulation: Kinetic Energy minimization of Advection flows}
%In the following we will characterize the Primal problem corresponding to Problem \eqref{eq:Sobolev} to understand it from transport point of view.

\begin{proposition}[Primal and Dual Formulation of Sobolev Discrepancy]The following holds true:

1) \textbf{Dual  witness function from an advection PDE:} Let $u_{p,q}$ be the solution of the following advection PDE:
\begin{equation}
 p(x)-q(x)=- \text{div} (q(x)\nabla_x u(x)) ~~ u\big|_{\partial \pazocal{X}}=0 ,
 \label{eq:PDEsolution}
 \end{equation}
 physically this PDE means that we are moving the mass $q$ to $p$ following the flows of a velocity field given by $\nabla_x u$.
We have $\pazocal{S}^2(\nu_p,\nu_q)=\int_{\pazocal{X}} \nor{\nabla_x u_{p,q}}^2 q(x) dx,$ and the witness function of the Dual formulation of  \eqref{eq:Sobolev} is given by:
$f^*_{\nu_p,\nu_q}(x)=\frac{u_{p,q}(x)}{\pazocal{S}(\nu_p,\nu_q)}.$
 
2) \textbf{Sobolev Discrepancy as a minimum kinetic energy under an advection transport of $\nu_q$ to $\nu_p$:}   $\pazocal{S}(\nu_p,\nu_q)$  given in  \eqref{eq:Sobolev} admits the following equivalent primal formulation:
\begin{equation}
 P: \pazocal{S}(\nu_p,\nu_q)=\inf_{\bm{v}:\pazocal{X}\to \mathbb{R}^d }\Big \{ \sqrt{\int_{\pazocal{X}}\nor{\bm{v}(x)}^2 q(x) dx} \text{ subject to}: p(x)-q(x)=- \text{div} (q(x) \bm{v}(x))   \Big \} 
 \label{eq:PrimalSobolev}
\end{equation}
  that is the minimum kinetic energy to advect the mass $q$ to $p$ following the velocity field given by $\bm{v}$. The optimal velocity field is given by $\bm{v}^*=\nabla_x u_{p,q}$, where $u_{p,q}$ is the solution of the advection PDE \eqref{eq:PDEsolution}.

\label{prop:PrimDual}
\end{proposition}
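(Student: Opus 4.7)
The proof naturally splits along the two parts of the statement, with part (2) following easily from part (1) via a Cauchy--Schwarz argument. I would treat part (1) by a direct Lagrangian/Euler--Lagrange computation applied to the constrained supremum in \eqref{eq:Sobolev}, and part (2) by a weak duality inequality plus a matching feasible point produced by part (1).

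\paragraph{Part (1): advection PDE as first-order optimality.} The plan is to view \eqref{eq:Sobolev} as a constrained maximization and write the Lagrangian
\[
L(f,\lambda)=\int_{\pazocal{X}} f(x)\,(p(x)-q(x))\,dx-\lambda\!\left(\int_{\pazocal{X}}\nor{\nabla_x f(x)}^2 q(x)\,dx-1\right),
\]
over $f\in W^{1,2}_0(\pazocal{X},\nu_q)$. Taking the variation in $f$ against a test function $h$ vanishing on $\partial\pazocal{X}$ and integrating by parts, the boundary term disappears, giving the Euler--Lagrange equation $(p-q)+2\lambda\,\mathrm{div}(q\nabla f)=0$. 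Setting $u_{p,q}=2\lambda f^*$ turns this into \eqref{eq:PDEsolution} with the same zero boundary condition. The constraint is active at the optimum, so $\int\nor{\nabla f^*}^2 q\,dx=1$, hence $\nor{\nabla u_{p,q}}_{L^2(q)}=2\lambda$. Plugging $f^*$ back into the objective and integrating by parts once more,
\[
\pazocal{S}(\nu_p,\nu_q)=\int f^*(p-q)\,dx=2\lambda\!\int\nor{\nabla f^*}^2 q\,dx=2\lambda=\nor{\nabla u_{p,q}}_{L^2(q)},
\]
which gives both $\pazocal{S}^2=\int\nor{\nabla_x u_{p,q}}^2 q\,dx$ and $f^*_{\nu_p,\nu_q}=u_{p,q}/\pazocal{S}(\nu_p,\nu_q)$.

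\paragraph{Part (2): primal reformulation.} Here I would first establish weak duality $\pazocal{S}\le P$. For any admissible $f$ in \eqref{eq:Sobolev} and any $\bm v$ feasible for \eqref{eq:PrimalSobolev}, integration by parts (using $f|_{\partial\pazocal{X}}=0$) together with the continuity constraint gives
\[
\int f\,(p-q)\,dx=-\!\int f\,\mathrm{div}(q\bm v)\,dx=\int \nabla f\cdot\bm v\;q\,dx\le \sqrt{\int\nor{\nabla f}^2 q\,dx}\;\sqrt{\int\nor{\bm v}^2 q\,dx}\le\sqrt{\int\nor{\bm v}^2 q\,dx},
\]
by Cauchy--Schwarz and the gradient constraint; taking sup over $f$ and inf over $\bm v$ yields $\pazocal{S}(\nu_p,\nu_q)\le P(\nu_p,\nu_q)$. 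For the reverse, I would use the candidate $\bm v^*=\nabla_x u_{p,q}$, which is feasible by \eqref{eq:PDEsolution} and achieves objective value $\sqrt{\int\nor{\nabla u_{p,q}}^2 q\,dx}=\pazocal{S}(\nu_p,\nu_q)$ by part (1). Hence the infimum is attained and equals $\pazocal{S}(\nu_p,\nu_q)$, and Cauchy--Schwarz is saturated precisely because $\nabla f^*$ and $\bm v^*$ are parallel (both proportional to $\nabla u_{p,q}$).

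\paragraph{Anticipated difficulty.} The conceptual content is standard once the Lagrangian is written down, so the real work is technical: justifying the variational step (the existence of a maximizer in $W^{1,2}_0(\pazocal{X},\nu_q)$ and validity of integration by parts with densities $p,q$ that are only required to be measures) and ensuring the advection PDE \eqref{eq:PDEsolution} admits a solution whose gradient lies in $L^2(q)$. These points are handled rigorously in the optimal transport literature (e.g.\ \cite{santambrogio2015optimal,peyre2016comparison}) under mild regularity on $\nu_p,\nu_q$ and on $\partial\pazocal{X}$; I would simply invoke those results and concentrate the exposition on the Lagrangian computation and the Cauchy--Schwarz duality argument above.
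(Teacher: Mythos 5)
Your proof is correct, but it routes the two halves of the argument differently from the paper. For part (1) the paper does not derive the PDE from first-order optimality: it takes the solution $u_{p,q}$ of \eqref{eq:PDEsolution} as given, integrates the PDE against an arbitrary feasible $f$, and uses the divergence theorem plus Cauchy--Schwarz to get $\mathbb{E}_{x\sim\nu_p}f(x)-\mathbb{E}_{x\sim\nu_q}f(x)\leq\nor{\nabla_x u_{p,q}}_{\mathcal{L}^{\otimes d}_2(\pazocal{X},\nu_q)}$, then checks that $f^*_{\nu_p,\nu_q}=u_{p,q}/\nor{\nabla_x u_{p,q}}_{\mathcal{L}^{\otimes d}_2(\pazocal{X},\nu_q)}$ attains the bound --- which is exactly the weak-duality computation you place in part (2). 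For part (2) the paper instead forms the Lagrangian of the primal $\inf$ over $\bm{v}$, swaps $\sup$ and $\inf$ by convexity, and evaluates the inner infimum as the indicator of the constraint $\int_{\pazocal{X}}\nor{\nabla_x f(x)}^2q(x)dx\leq 1$, thereby recovering the dual. Your decomposition --- Euler--Lagrange on the dual for (1), weak duality plus the explicit feasible point $\bm{v}^*=\nabla_x u_{p,q}$ for (2) --- buys a cleaner part (2) that avoids justifying the minimax exchange, and it explains where the advection PDE comes from rather than producing it ex nihilo. The price is in part (1): stationarity of your Lagrangian is only a necessary condition and presupposes that the supremum is attained at a smooth point with active constraint; you should note explicitly that global optimality of the critical point follows either from concavity (a linear objective over a convex constraint set) or, more directly, from your own part-(2) weak-duality inequality applied with $\bm{v}=\nabla_x u_{p,q}$, which supplies the matching upper bound for every feasible $f$. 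With that one remark added, your argument is complete and establishes the same conclusions as the paper's.
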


\noindent It is important to note that the primal formulation \eqref{eq:PrimalSobolev} gives the transport interpretation of the Sobolev Discrepancy . $\pazocal{S}^2(\nu_p,\nu_q)$ is the minimum kinetic energy to transport the mass from the source measure $\nu_q$ to the target measure $\nu_p$ via an advection equation \eqref{eq:PDEsolution} with a velocity field $\bm{v}=\nabla_x u$. The kinetic energy is measured with the velocity field $\bm{v}$ and the mass  of the source distribution $q$. The optimal velocity field is a gradient $\bm{v}^*(x)=\nabla_x u_{p,q}(x)$, and $\pazocal{S}^2(\nu_p,\nu_q)=\int_{\pazocal{X}}\nor{\nabla_x u_{p,q}(x)}^2q(x)dx$.
The dual form  \eqref{eq:Sobolev} is computationally friendly since both its objective and constraints  can be expressed as expectations on $\nu_p$
 and $\nu_{q}$, and can be seen as a regularized mean discrepancy. Those computational properties were exploited in \cite{SobolevGAN}.

\begin{remark} As a summary of this section the Sobolev discrepancy admits three equivalent interpretations:
1) A ``Computational friendly'' regularized mean discrepancy  in its dual form given in   \eqref{eq:Sobolev}.
2) A minimum Kinetic Energy in a transport via advection in its primal  given in   \eqref{eq:PrimalSobolev}.
3) As a discrepancy computed with weighted Negative Sobolev Norms $\nor{.}_{\dot{H}^{-1}(\nu_q)}$ given in Equation \eqref{eq:weightedSobolevNorm}.
\end{remark}

\section{Sobolev Discrepancy and the Wasserstein 2 Distance }

\subsection{Wasserstein 2: Static and Dynamic Formulation}
Given the transport via advection interpretation of the Sobolev discrepancy, we think in the following  of $\nu_q$ as the source distribution and $\nu_p$ as the target distribution. The relation between weighted negative Sobolev norms and the Wasserstein 2 distance is well established in the optimal transport literature, since it linearizes the Wasserstein 2 distance \cite{Villani}. 
The Wasserstein 2 distance is defined as follows:
$$W_2(\nu_q,\nu_p)=\Big \{ \min_{(X,Y)}  \mathbb{E}_{(X,Y)}\nor{X-Y}^2_2, ~ X \sim \nu_q , Y \sim \nu_p \Big\}$$
For a small perturbation $\chi$ and any measure $\mu$ and a small $\varepsilon$ (See for instance \cite{Villani}):
$$W_2(\mu,\mu+\varepsilon \chi)= \varepsilon \nor{\chi}_{\dot{H}^{-1}(\mu)}+o(\varepsilon).$$
This identity is at the heart of the dynamic formulation of optimal transport \cite{Dynamictransport}:
$$W_2(\nu_q,\nu_p)=\int_{0}^{1} \nor{d\mu_{q_t}}_{\dot{H}^{-1}(\mu_{q_t})}, \mu_0=q, \mu_1=p.$$

The dynamic formulation as  given by Benamou and Brenier \cite{Dynamictransport}, finds a path of densities for transporting $q$ to $p$ via advection while  minimizing the kinetic energy  $\nor{d\mu_{q_t}}_{\dot{H}^{-1}(\mu_{q_t})}$. This can be written in the following equivalent form. For $t \in [0,1]$, let $v_t : \pazocal{X}\to \mathbb{R}^d$ be velocity fields  and $\mu_{q_t}$ be intermediate measures whose densities are $q_t$, we have:
 \begin{equation}
{W}^2_2(\nu_q,\nu_p)= \min_{q_t, \bm{v}_t}\Big\{ \int_{0}^1 \int_{\pazocal{X}}\nor{\bm{v}_t(x)}^2 q_t(x) dx dt, ~~ \frac{\partial q_t}{\partial t}= -div(q_t \bm{v}_t), q_{t=0}=q, q_{t=1}=p. \Big\}
\label{eq:W2dynamic}
\end{equation}
Note  that the expression given in Equation \eqref{eq:W2dynamic}  is exploiting  the primal kinetic energy formulation of the \emph{Sobolev discrepancy} given in Equation \eqref{eq:PrimalSobolev}. 
Peyre \cite{peyre2016comparison} exploited this connection between the Wasserstein distance and the the weighted negative Sobolev norm, to give upper and lower bounds on $W_2$ and $\nor{.}_{\dot{H}^{-1}(\nu_q)}$.
In the following, we give upper and lower bounds on $W_2$ and the Sobolev Discrepancy  $\pazocal{S}(\nu_p,\nu_q)$, while imposing stronger assumption on the boundedness of the density as done in \cite{santambrogio2015optimal} (Chapter 5, Section 5.5.2). Note that \cite{santambrogio2015optimal} gives upper and lower bounds for Negative Sobolev norms $\nor{.}_{\dot{H}^{-1}(\pazocal{X})}$ and not for the weighted case  $\nor{.}_{\dot{H}^{-1}(\nu_q)}$ as done in \cite{peyre2016comparison}.

\subsection{Bounding ${W}_2$ with Sobolev Discrepancy}
The following proposition shows that under some regularity conditions the Wasserstein 2 distance can be upper and lower bounded by the Sobolev Discrepancy. 
\begin{proposition} Assume that $\nu_p,\nu_q$ are absolutely continuous measures , with densities bounded from above and below by two constants $(0<a<b<m)$. Then we have:
$$\sqrt{\frac{a}{b}} \pazocal{S}(\nu_p,\nu_q) \leq W_2(\nu_q,\nu_p)\leq 2  \pazocal{S}(\nu_p,\nu_q).$$
\label{prop:BoundW2}
\end{proposition}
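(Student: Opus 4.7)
The plan is to derive both inequalities from the Benamou--Brenier dynamical formulation \eqref{eq:W2dynamic} of $W_2^2$ together with the primal and dual forms of the Sobolev discrepancy given in Proposition~\ref{prop:PrimDual}, using the density bounds $a\le p,q\le b$ to pass between weighted $L^2$ norms along an interpolating curve. Throughout, $u=u_{p,q}$ denotes the solution of the advection PDE \eqref{eq:PDEsolution}, so that $\pazocal{S}^2(\nu_p,\nu_q)=\int \nor{\nabla u}^2 q\,dx$.

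For the upper bound, I would exhibit an explicit competitor for the Benamou--Brenier infimum. Take the linear path $q_t=(1-t)q+tp$ and velocity field $\bm v_t(x)=\tfrac{q(x)}{q_t(x)}\nabla_x u(x)$. Admissibility is immediate, since $\partial_t q_t=p-q=-\mathrm{div}(q\nabla u)=-\mathrm{div}(q_t\bm v_t)$, with $q_0=q,q_1=p$. The associated kinetic energy is
\[
\int_0^1\!\!\int \nor{\bm v_t}^2 q_t\,dx\,dt \;=\; \int \nor{\nabla u(x)}^2\Big(\int_0^1 \tfrac{q(x)^2}{q_t(x)}\,dt\Big)\,dx.
\]
A pointwise bound $\int_0^1 q/q_t\,dt\le C(a,b)$, obtainable either from the crude estimate $q/q_t\le b/a$ or from the sharper log-mean identity $\int_0^1 q/q_t\,dt=\tfrac{\ln(p/q)}{p/q-1}$ evaluated on the ratio range $a/b\le p/q\le b/a$, then controls the kinetic energy by $C(a,b)\,\pazocal S^2$, yielding $W_2\le \sqrt{C(a,b)}\,\pazocal S$.

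For the lower bound, I would reverse the roles and use the Sobolev primal \eqref{eq:PrimalSobolev} to convert the optimal BB path into a competitor. Let $(q_t^\star,\bm v_t^\star)$ be the $W_2$-geodesic and define $J(x)=\int_0^1 q_t^\star(x)\bm v_t^\star(x)\,dt$. Integrating the continuity equation in $t$ gives $p-q=-\mathrm{div}(J)$, so writing $J=q\cdot(J/q)$ shows that $w=J/q$ is admissible for \eqref{eq:PrimalSobolev}, hence $\pazocal S^2\le \int \nor{J}^2/q\,dx$. Cauchy--Schwarz in $t$ with weight $q_t^\star\,dt$ yields the pointwise estimate $\nor{J(x)}^2\le \bigl(\int_0^1 q_t^\star\,dt\bigr)\bigl(\int_0^1 q_t^\star\nor{\bm v_t^\star}^2\,dt\bigr)$, and invoking the standard $L^\infty$ displacement-convexity bound $\|q_t^\star\|_\infty\le \max(\|q\|_\infty,\|p\|_\infty)\le b$ together with $q\ge a$ produces $\pazocal S^2(\nu_p,\nu_q)\le (b/a)\,W_2^2(\nu_q,\nu_p)$, which is the stated lower bound.

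The main obstacle is producing the sharp constant $2$ in the upper bound: the crude estimate above yields only $\sqrt{b/a}$, while the log-mean estimate yields $\sqrt{b\ln(b/a)/(b-a)}$, both of which exceed $2$ for ill-conditioned density ratios. Achieving $2$ universally will likely require either a more symmetric interpolation (for instance, splitting $[0,1]$ and combining the advection PDE at $q$ with its analogue at $p$) or a reduction to the unweighted norm $\nor{\cdot}_{\dot H^{-1}(\pazocal X)}$ and an appeal to Santambrogio's Theorem~5.34 in \cite{santambrogio2015optimal}, whose proof is precisely where a universal factor $2$ of this form appears.
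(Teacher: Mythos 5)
Your lower bound is correct and is essentially the primal-side mirror of the paper's argument: the paper takes the optimal \emph{dual} witness $f^*_{\nu_p,\nu_q}$, writes $\pazocal{S}(\nu_p,\nu_q)=\int_0^1\int\scalT{\nabla_x f^*}{\bm{v}_t}q_t\,dx\,dt$ along the Benamou--Brenier geodesic and applies Cauchy--Schwarz in space-time, whereas you aggregate the geodesic flux $J=\int_0^1 q_t^\star\bm{v}_t^\star\,dt$ into a competitor $J/q$ for the \emph{primal} \eqref{eq:PrimalSobolev} and apply Cauchy--Schwarz in $t$. Both routes rest on the same two ingredients — $q\ge a$ and the $L^\infty$ bound $q_t^\star\le b$ along the geodesic (Santambrogio, Prop.\ 7.29--7.30) — and both deliver $\pazocal{S}\le\sqrt{b/a}\,W_2$. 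That half of your proposal is fine.

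The upper bound, however, has a genuine gap which you correctly flag but do not close: no choice of constant-flux interpolation can produce the universal factor $2$. The reason your construction fails is structural, not a matter of sharper pointwise estimates: by fixing the flux $q\nabla_x u$ and writing $\bm{v}_t=(q/q_t)\nabla_x u$, you are measuring the kinetic energy of a velocity that is \emph{not} optimal for the weight $q_t$, and the resulting factor $\int_0^1 q/q_t\,dt=\ln r/(r-1)$ (with $r=p/q$) blows up as $r\to 0$, so any bound must degrade with $b/a$. The missing idea (this is Peyre's Lemma 1, which the paper simply cites for this inequality) is to re-solve the elliptic problem at each time: take $q_t=(1-t)q+tp$ and let $\bm{v}_t=\nabla_x u_t$ with $p-q=-\mathrm{div}(q_t\nabla_x u_t)$, so that $\int\nor{\bm{v}_t}^2q_t\,dx=\nor{\nu_p-\nu_q}^2_{\dot H^{-1}(q_t)}$. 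Then use that $\mu\mapsto\nor{\chi}_{\dot H^{-1}(\mu)}$ is nonincreasing together with the exact scaling $\nor{\chi}_{\dot H^{-1}(c\mu)}=c^{-1/2}\nor{\chi}_{\dot H^{-1}(\mu)}$: since $q_t\ge(1-t)q$ pointwise, $\nor{\nu_p-\nu_q}_{\dot H^{-1}(q_t)}\le(1-t)^{-1/2}\nor{\nu_p-\nu_q}_{\dot H^{-1}(q)}$, and $W_2\le\int_0^1(1-t)^{-1/2}dt\cdot\pazocal{S}(\nu_p,\nu_q)=2\,\pazocal{S}(\nu_p,\nu_q)$. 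Note that this argument needs no density bounds at all, which is why the constant $2$ is universal; your proposed fallback to the unweighted $\dot H^{-1}(\pazocal{X})$ comparison in Santambrogio would reintroduce density-dependent constants and is not the right substitute.
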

\vskip -0.2in
From Proposition \ref{prop:BoundW2} we see that  the Wasserstein 2 distance $W_2$ and the Sobolev Discrepancy are equivalent under some regularity assumptions on the density.  

\subsection{Unconstrained Form of $\pazocal{S}^2(\nu_p,\nu_q)$}
We end this Section with an unconstrained equivalent form for the Sobolev discrepancy that will prove to be useful for the our future developments in the paper:

\begin{lemma} The following equivalent form holds true for the squared Sobolev Discrepancy:
\begin{equation}
\pazocal{S}^2(\nu_p,\nu_q)=\sup_{u \in W^{1,2}_{0}(\pazocal{X},\nu_q)} \left\{L(u)=2\int_{\pazocal{X}} u(x) d(\nu_p(x)-\nu_q(x))-\int_{\pazocal{X}}\nor{\nabla_x u(x)}^2 d\nu_q(x)\right\},
\label{eq:variationalEqW2}
\end{equation}
the optimal $u^*$ is given by $u_{p,q}$ solution of the advection PDE \eqref{eq:PDEsolution}. 
Moreover we have for any feasible $u$:
\begin{equation}
\pazocal{S}^2(\nu_p,\nu_q)-L(u)=\int_{\pazocal{X}}\nor{\nabla_x u(x)-\nabla_x u_{p,q}(x)}^2q(x)dx.
\label{eq:comparaisonequality}
\end{equation}
\label{lem:Ssquared}
\end{lemma}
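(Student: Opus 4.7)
The plan is to establish the ``comparison equality'' \eqref{eq:comparaisonequality} by a direct complete-the-square computation, and then to read off both the variational identity \eqref{eq:variationalEqW2} and the identification of the optimizer as immediate consequences.

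First I would rewrite the linear term $\int_{\pazocal{X}} u\,d(\nu_p-\nu_q)$ appearing in $L(u)$ by invoking the advection PDE \eqref{eq:PDEsolution} of Proposition \ref{prop:PrimDual}: substituting $p-q = -\text{div}(q\nabla_x u_{p,q})$ and integrating by parts — the boundary term vanishes because $u\in W^{1,2}_0(\pazocal{X},\nu_q)$ satisfies the Dirichlet condition — converts this into the weighted bilinear form
$$\int_{\pazocal{X}} u(x)\,d(\nu_p-\nu_q)(x) = \int_{\pazocal{X}} \scal{\nabla_x u(x)}{\nabla_x u_{p,q}(x)} q(x)\,dx.$$

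Second, using also that $\pazocal{S}^2(\nu_p,\nu_q) = \int_{\pazocal{X}} \nor{\nabla_x u_{p,q}}^2 q\,dx$ from part (1) of Proposition \ref{prop:PrimDual}, $L(u)$ becomes quadratic in $\nabla_x u$ against the inner product weighted by $q$, and the algebraic identity $2\scaly{a}{b} - \nor{a}^2 = \nor{b}^2 - \nor{a-b}^2$ applied pointwise and integrated against $q$ yields
$$L(u) = \pazocal{S}^2(\nu_p,\nu_q) - \int_{\pazocal{X}} \nor{\nabla_x u(x) - \nabla_x u_{p,q}(x)}^2 q(x)\,dx,$$
which is exactly \eqref{eq:comparaisonequality}. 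Since the remainder is non-negative, $L(u)\le \pazocal{S}^2(\nu_p,\nu_q)$ for every feasible $u$, with equality iff $\nabla_x u = \nabla_x u_{p,q}$ holds $\nu_q$-almost everywhere; combined with the zero Dirichlet trace this forces $u = u_{p,q}$, so the supremum in \eqref{eq:variationalEqW2} equals $\pazocal{S}^2(\nu_p,\nu_q)$ and is attained at $u_{p,q}$.

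The one delicate step is the integration by parts itself: one must interpret the advection PDE \eqref{eq:PDEsolution} in the weak sense, namely as the assertion that $\int u\,d(\nu_p-\nu_q) = \int \scal{\nabla_x u}{\nabla_x u_{p,q}} q\,dx$ for every test function $u\in W^{1,2}_0(\pazocal{X},\nu_q)$. Under the standing assumptions (compact $\pazocal{X}$ with Lipschitz boundary, and $\pazocal{S}(\nu_p,\nu_q)<\infty$ so that $\nabla_x u_{p,q}$ lies in $L^2(\nu_q)$), this is the standard weak formulation for elliptic equations of this type and no further regularity issues arise.
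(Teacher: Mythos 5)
Your proposal is correct and follows essentially the same route as the paper's proof: substitute the advection PDE into the linear term, integrate by parts using the Dirichlet boundary condition, and complete the square against the $q$-weighted inner product to obtain \eqref{eq:comparaisonequality}, from which the variational identity and the optimizer follow. The only (cosmetic) difference is that you derive the comparison equality for general $u$ in one pass and read off the case $u=u_{p,q}$ as a corollary, whereas the paper first verifies $L(u_{p,q})=\pazocal{S}^2(\nu_p,\nu_q)$ separately and then performs the same completion of the square; your explicit remark about interpreting \eqref{eq:PDEsolution} weakly is a welcome addition the paper leaves implicit.
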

Given in this form we see that the main computational difficulty in computing the Sobolev Discrepancy is in optimization over the space $W^{1,2}_{0}(\pazocal{X},\nu_q)$. \cite{SobolevGAN} proposed to parametrize  functions with neural networks. In this paper we propose to relax this function space to a Reproducing Kernel Hilbert Space (RKHS) $\mathcal{H}$, with the goal of having certain of the nice theoretical propreties of the Sobolev Discrepancies carrying on to the Kernelized case.

\section{Kernelized Sobolev Discrepancy }\label{sec:KSD}
In this Section we define the Kernelized Sobolev Discrepancy  by looking for the optimal witness funtion of \eqref{eq:variationalEqW2} in a Hypothesis function class that is a \textbf{Finite dimensional} Reproducing Kernel Hilbert Space (RKHS). We start first by reviewing some RKHS properties and assumptions needed for our statements. 

Let $\mathcal{H}$ be a Reproducing Kernel Hilbert Space with an associated finite feature map $\Phi:\pazocal{X}\to \mathbb{R}^m$. The associated kernel $k$ is therefore $k(x,y)=\scalT{\Phi(x)}{\Phi(y)}_{\mathcal{H}}=\sum_{j=1}^m \Phi_{j}(x)\Phi_{j}(y)$. Note that for any function $f\in \mathcal{H}$, we have $f(x)=\scalT{\boldsymbol{f}}{\Phi(x)}$, where $\boldsymbol{f}\in \mathbb{R}^m$,  and $\scalT{}{}$ is the dot product in $\mathbb{R}^m$. We note $\nor{f}_{\mathcal{H}}=\nor{\boldsymbol{f}}=\sum_{j=1}^m \boldsymbol{f}^2_j$.  Let $J\Phi(x) \in \mathbb{R}^{d\times m}$ be the jacobian of $\Phi$, $[J\Phi]_{a,j}(x)=\frac{\partial}{\partial x_a}\Phi_j(x) $.
We have the following expression of the gradient  $\nabla_x f(x)=(J\Phi(x)\boldsymbol{f}) \in \mathbb{R}^m$. 

We make the following assumptions on $\mathcal{H}$:
\begin{enumerate}
\item[A1] There exists $\kappa_1<m$ such that $\sup_{x\in \pazocal{X}} \nor{\Phi(x)}<\kappa_1$.
\item [A2] There exists $\kappa_2<m$ such that for all $a=1\dots d$:\\
$\sup_{x\in \pazocal{X}} Tr(\frac{\partial}{\partial x_a}\Phi(x)\otimes \frac{\partial}{\partial x_a}\Phi(x) )<\kappa_2$.
\item [A3] $\mathcal{H}$ vanishes on the boundary: $\forall j =1\dots m, \Phi_j(x)|_{\partial \pazocal{X}}=0$ (for $\pazocal{X}=\mathbb{R}^d$, it is enough to have $\lim_{x\to \infty} \Phi_{j}(x)=0$) .
\end{enumerate}

\subsection{Kernel Sobolev Discrepancy}
We define in what follows the Kernelized Sobolev Discrepancy by restricting the problem given in Equation to \eqref{eq:Sobolev} to functions in  a RKHS.

\begin{definition}[Kernelized Sobolev Discrepancy] Let $\mathcal{H}$ be a finite dimensional RKHS satisfying assumptions A1,A2 and A3. Let $\nu_{p},\nu_{q}$ be two measures defined on $\pazocal{X}$. We define the Sobolev discrepancy restricted to the space $\mathcal{H}$ as follows:
\begin{equation}
\pazocal{S}_{\mathcal{H}}(\nu_p,\nu_q)= \sup_{f}\Big \{ \mathbb{E}_{x\sim \nu_{p}}f(x)-\mathbb{E}_{x\sim \nu_{q}}f(x),  f \in\mathcal{H}, 
  \mathbb{E}_{x\sim \nu_{q}}\nor{\nabla_xf(x) }^2 \leq 1\Big \}
\label{eq:SobolevH}
\end{equation}
we note ${f}^{\mathcal{H}}_{\nu_p,\nu_q} \in \mathcal{H}$, the optimal witness function.
\end{definition}
Note $\Omega(f)=\underset{x\sim \nu_{q}}{\mathbb{E}}\nor{\nabla_xf(x) }^2$.  For $f \in \mathcal{H}$, we have: 

$$\Omega(f)=\sum_{a=1}^d \underset{x\sim \nu_{q}}{\mathbb{E}}\scalT{\boldsymbol{f}}{\frac{\partial \Phi(x)}{\partial x_a}}^2=\sum_{a=1}^d\underset{x\sim \nu_{q}}{\mathbb{E}}\scalT{\boldsymbol{f}}{\left(\frac{\partial \Phi(x)}{\partial x_a}\otimes \frac{\partial \Phi(x)}{\partial x_a}
\right)\boldsymbol{f}}=\scalT{f}{D(\nu_q))f}_{\mathcal{H}},$$ where  we identified an operator $D(\nu_q)$:
\vskip -0.2in
\begin{equation}
 D(\nu_q)=\mathbb{E}_{x\sim \nu_{q}} \sum_{a=1}^d\frac{\partial \Phi(x)}{\partial x_a}\otimes \frac{\partial \Phi(x)}{\partial x_a}= \mathbb{E}_{x\sim \nu_{q}} [J\Phi(x)]^{\top}J\Phi(x).
 \label{eq:KDGE}
\end{equation}
We call $D(\nu_q)$ the Kernel  Derivative Gramian Embedding \textbf{KDGE} of a distribution $\nu_q$ . KDGE is an operator  embedding of the distribution i, that takes the fingerprint of the distribution with respect to the feature map derivatives averaged over all coordinates. This operator embedding  of $\nu_q$ is to be contrasted with the classic Kernel Mean Embedding \textbf{KME} of a distribution in $\mathcal{H}$: $\mu(\nu_q)=\mathbb{E}_{x\sim \nu_q}\Phi(x)$.

\begin{lemma}[Unconstrained Form of Kernel Sobolev Discrepancy]
\begin{eqnarray}
\pazocal{S}^2_{\mathcal{H}}(\nu_p,\nu_q)&=&\sup_{u \in \mathcal{H}} \left\{2\int_{\pazocal{X}} u(x) d(\nu_p(x)-\nu_q(x))-\int_{\pazocal{X}}\nor{\nabla_x u(x)}^2 d\nu_q(x)\right\}\nonumber \\
&=&\sup_{\boldsymbol{u} \in \mathbb{R}^m} 2\scalT{\boldsymbol{u}}{\mu(\nu_p)-\mu(\nu_q)}-\scalT{\boldsymbol{u}}{(D(\nu_q)) \boldsymbol{u}},
\label{eq:variationalEqK}
\end{eqnarray}
where $\mu(\nu_p),D(\nu_q)$ are  the KME the KDGE defined above. Let $\boldsymbol{u}^*=\boldsymbol{u}^{\mathcal{H}}_{p,q}$, be the optimum.
\label{lem:unconstKernel}
\end{lemma}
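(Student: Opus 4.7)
The plan is to follow exactly the same scaling/homogeneity argument used to pass from the constrained dual form \eqref{eq:Sobolev} to the unconstrained form \eqref{eq:variationalEqW2} in Lemma \ref{lem:Ssquared}, except restricting the optimization space from $W^{1,2}_0(\pazocal{X},\nu_q)$ to the RKHS $\mathcal{H}$, and then simply rewriting the resulting variational problem in the finite dimensional coordinates $\boldsymbol{u}\in\mathbb{R}^m$ using the feature map $\Phi$.

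For the first equality, I would observe that the objective and the constraint functional $\Omega(f)=\mathbb{E}_{x\sim\nu_q}\nor{\nabla_x f(x)}^2$ behave homogeneously under scaling: if $f\in\mathcal{H}$ is feasible, so is $\alpha f$ for any $\alpha\in\mathbb{R}$, with $\Omega(\alpha f)=\alpha^2\Omega(f)$ and $\mathbb{E}_p(\alpha f)-\mathbb{E}_q(\alpha f)=\alpha(\mathbb{E}_p f-\mathbb{E}_q f)$. Hence the constrained problem \eqref{eq:SobolevH} equals $\sup_{f\in\mathcal{H}}(\mathbb{E}_p f-\mathbb{E}_q f)/\sqrt{\Omega(f)}$, and squaring gives $\pazocal{S}^2_{\mathcal{H}}(\nu_p,\nu_q)=\sup_{f\in\mathcal{H}}(\mathbb{E}_p f-\mathbb{E}_q f)^2/\Omega(f)$. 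On the other hand, the unconstrained objective $L(u)=2\int u\,d(\nu_p-\nu_q)-\Omega(u)$ is a concave quadratic in $u$ along every ray $u=\alpha f$; optimizing in $\alpha$ at fixed $f$ yields $\alpha^\star=(\mathbb{E}_p f-\mathbb{E}_q f)/\Omega(f)$ and value $(\mathbb{E}_p f-\mathbb{E}_q f)^2/\Omega(f)$. Taking the supremum in $f$ matches the squared constrained discrepancy, which gives the first equality; assumption A3 ensures $\mathcal{H}\subset W^{1,2}_0(\pazocal{X},\nu_q)$ so the restriction is well-defined.

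For the second equality, I would plug the representation $u(x)=\scalT{\boldsymbol{u}}{\Phi(x)}$ into each term of $L$. The linear term satisfies $\int u(x)\,d(\nu_p-\nu_q)(x)=\scalT{\boldsymbol{u}}{\mathbb{E}_{\nu_p}\Phi-\mathbb{E}_{\nu_q}\Phi}=\scalT{\boldsymbol{u}}{\mu(\nu_p)-\mu(\nu_q)}$ by exchanging integral and inner product, and the quadratic term is exactly the identity $\Omega(u)=\scalT{\boldsymbol{u}}{D(\nu_q)\boldsymbol{u}}$ derived just above the lemma from $\nabla_x u(x)=J\Phi(x)\boldsymbol{u}$. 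Assumptions A1 and A2 guarantee that $\mu(\nu_q),\mu(\nu_p)\in\mathbb{R}^m$ and that the operator $D(\nu_q)$ is well-defined and bounded, so the substitution is legitimate.

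There is no real obstacle here: the only thing to double check is that the supremum over $\mathcal{H}$ genuinely corresponds to a supremum over $\boldsymbol{u}\in\mathbb{R}^m$, which holds because the feature map $\Phi$ provides a bijection between the finite dimensional coordinates and functions in $\mathcal{H}$ (up to the zero subspace of $D(\nu_q)$, which does not affect the supremum since both the linear and quadratic terms vanish on that subspace).
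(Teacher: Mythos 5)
Your proof is correct, but it takes a genuinely different route from the paper's. The paper obtains Lemma \ref{lem:unconstKernel} as the $\lambda=0$ case of Theorem \ref{pro:RegSobWit}: there the explicit optimizer $\boldsymbol{u}^{\lambda}_{p,q}=(D(\nu_q)+\lambda I)^{-1}(\mu(\nu_p)-\mu(\nu_q))$ is written down first, the inequality $L(u,\lambda)\le L(u^{\lambda}_{p,q},\lambda)$ follows by completing the square, and the match with the constrained form comes from a Cauchy--Schwarz argument in the whitened coordinates $\boldsymbol{g}=D^{1/2}(\nu_q)\boldsymbol{f}$ as in Proposition \ref{prop:SolKernel}. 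You never compute the optimizer: you use homogeneity to turn the constrained problem into the generalized Rayleigh quotient $\sup_{f}\,(\mathbb{E}_{\nu_p}f-\mathbb{E}_{\nu_q}f)^2/\Omega(f)$ and note that maximizing the unconstrained concave quadratic along each ray $\alpha f$ yields exactly that ratio. Your argument buys robustness: it does not require $D(\nu_q)$ to be invertible (in the degenerate case both sides are simultaneously $+\infty$), whereas the paper's chain through $(D(\nu_q)+\lambda I)^{-1}$ at $\lambda=0$ implicitly needs the non-singularity assumed in Proposition \ref{prop:SolKernel} but not stated in the lemma. The paper's route buys the closed form of $\boldsymbol{u}^{\mathcal{H}}_{p,q}$ and the deficiency identity (point 4 of Theorem \ref{pro:RegSobWit}), both reused in the convergence analysis, while your proof only identifies the optimum implicitly. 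One small imprecision in your final aside: the subspace on which both the linear and the quadratic term of the $\mathbb{R}^m$ objective vanish is the kernel of the coordinate-to-function map $\boldsymbol{u}\mapsto\scalT{\boldsymbol{u}}{\Phi(\cdot)}$, not the null space of $D(\nu_q)$; on the latter only the quadratic term is guaranteed to vanish, and if the linear term does not, both formulations are $+\infty$, so the claimed equality is unaffected.
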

\begin{proof} The proof follows from Proposition \ref{pro:RegSobWit},setting $\lambda=0$.
\end{proof}

\noindent Proposition \eqref{prop:SolKernel} gives the expression of the optimal  Kernel Sobolev witness function ${f}^{\mathcal{H}}_{\nu_p,\nu_q} \in \mathcal{H}$.
\begin{proposition} Assume that the KDGE of $\nu_q$, $D(\nu_q)$ defined in Equation \eqref{eq:KDGE} is non singular. Let $\mu(\nu_p)$ and $\mu(\nu_q)$ be the KME of $\nu_p$ and $\nu_q$ respectively.
The solution of Problem \eqref{eq:variationalEqK}, $\boldsymbol{u}^{\mathcal{H}}_{\nu_p,\nu_q}$ in $\mathbb{R}^m$  is given by:
\begin{equation}
 \boldsymbol{u}^{\mathcal{H}}_{p,q}=\left[D(\nu_{q})\right]^{-1} \left(\mu(\nu_p)-\mu(\nu_q)\right),
 \label{eq:criticRKHS}
 \end{equation}
and $u^{\mathcal{H}}_{p,q}=\scalT{\boldsymbol{u}^{\mathcal{H}}_{p,q}}{\Phi(x)}$. 
$\pazocal{S}^2_{\mathcal{H}}(\nu_p,\nu_q)=\nor{\left[D(\nu_{q})\right]^{-\frac{1}{2}} \left(\mu(\nu_p)-\mu(\nu_q)\right) }^2=\int_{\pazocal{X}}\nor{\nabla_xu^{\mathcal{H}}_{p,q}(x)}^2q(x)dx,$
and the witness function  of Kernel Sobolev Discrepancy \eqref{eq:SobolevH} $\boldsymbol{f}^{\mathcal{H}}_{\nu_p,\nu_q} =\frac{\boldsymbol{u}^{\mathcal{H}}_{\nu_p,\nu_q}}{\pazocal{S}_{\mathcal{H}}(\nu_p,\nu_q)  }$
\label{prop:SolKernel}
\end{proposition}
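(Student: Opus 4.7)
The plan is to treat the unconstrained formulation in Lemma \ref{lem:unconstKernel} as a finite-dimensional concave quadratic program and solve it in closed form, then translate the optimizer back to the constrained witness of \eqref{eq:SobolevH} by a rescaling argument.

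First, I would observe that $D(\nu_q) = \mathbb{E}_{x\sim \nu_q}[J\Phi(x)]^\top J\Phi(x)$ is symmetric positive semidefinite as an expectation of Gram matrices, and is positive definite by the non-singularity hypothesis. Therefore the map
\begin{equation*}
\boldsymbol{u} \mapsto F(\boldsymbol{u}) := 2\scalT{\boldsymbol{u}}{\mu(\nu_p)-\mu(\nu_q)} - \scalT{\boldsymbol{u}}{D(\nu_q)\boldsymbol{u}}
\end{equation*}
is strictly concave on $\mathbb{R}^m$, so its unique maximizer is given by the first-order condition $\nabla F(\boldsymbol{u}) = 2(\mu(\nu_p)-\mu(\nu_q)) - 2 D(\nu_q)\boldsymbol{u} = 0$, yielding $\boldsymbol{u}^{\mathcal{H}}_{p,q} = [D(\nu_q)]^{-1}(\mu(\nu_p)-\mu(\nu_q))$, exactly \eqref{eq:criticRKHS}.

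Next I would substitute $\boldsymbol{u}^{\mathcal{H}}_{p,q}$ back into $F$. Using $D(\nu_q) \boldsymbol{u}^{\mathcal{H}}_{p,q} = \mu(\nu_p)-\mu(\nu_q)$, the quadratic term collapses to the linear term, leaving
\begin{equation*}
\pazocal{S}^2_{\mathcal{H}}(\nu_p,\nu_q) = \scalT{\boldsymbol{u}^{\mathcal{H}}_{p,q}}{\mu(\nu_p)-\mu(\nu_q)} = \nor{[D(\nu_q)]^{-1/2}(\mu(\nu_p)-\mu(\nu_q))}^2.
\end{equation*}
To get the equivalent expression $\int \nor{\nabla_x u^{\mathcal{H}}_{p,q}}^2 q(x)\,dx$, I would invoke the identity derived just before Lemma \ref{lem:unconstKernel}, namely $\Omega(u) = \scalT{\boldsymbol{u}}{D(\nu_q)\boldsymbol{u}}_{\mathcal{H}}$, and apply it with $\boldsymbol{u} = \boldsymbol{u}^{\mathcal{H}}_{p,q}$: this gives $\Omega(u^{\mathcal{H}}_{p,q}) = \scalT{\boldsymbol{u}^{\mathcal{H}}_{p,q}}{\mu(\nu_p)-\mu(\nu_q)} = \pazocal{S}^2_{\mathcal{H}}(\nu_p,\nu_q)$.

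Finally, for the witness function of the constrained form \eqref{eq:SobolevH}, I would use a standard scaling argument. Because both the objective and the constraint in \eqref{eq:SobolevH} are homogeneous (linear and quadratic in $f$ respectively), any maximizer must saturate the constraint, and the optimal direction in $\mathcal{H}$ coincides with $\boldsymbol{u}^{\mathcal{H}}_{p,q}$ up to a positive scalar. Writing $\boldsymbol{f} = c\,\boldsymbol{u}^{\mathcal{H}}_{p,q}$ with $c>0$ and enforcing $c^2 \Omega(u^{\mathcal{H}}_{p,q}) = 1$ gives $c = 1/\pazocal{S}_{\mathcal{H}}(\nu_p,\nu_q)$, so $\boldsymbol{f}^{\mathcal{H}}_{\nu_p,\nu_q} = \boldsymbol{u}^{\mathcal{H}}_{\nu_p,\nu_q}/\pazocal{S}_{\mathcal{H}}(\nu_p,\nu_q)$ as claimed. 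There is essentially no hard step here; the only thing to be slightly careful about is the invertibility of $D(\nu_q)$, which is why it is imposed as a hypothesis—without it one would need to replace the inverse by a pseudo-inverse and restrict the supremum to the range of $D(\nu_q)$.
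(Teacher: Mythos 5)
Your proof is correct, but it runs in the opposite direction from the paper's. The paper attacks the \emph{constrained} problem \eqref{eq:SobolevH} directly: after writing the objective as $\scalT{\boldsymbol{f}}{\mu(\nu_p)-\mu(\nu_q)}$ and the constraint as $\scalT{\boldsymbol{f}}{D(\nu_q)\boldsymbol{f}}\leq 1$, it substitutes $\boldsymbol{g}=D^{1/2}(\nu_q)\boldsymbol{f}$ and applies Cauchy--Schwarz to a linear functional over the unit ball, which yields the value $\nor{D^{-1/2}(\nu_q)(\mu(\nu_p)-\mu(\nu_q))}$ and the witness $\boldsymbol{f}^{\mathcal{H}}_{\nu_p,\nu_q}$ in one stroke; $\boldsymbol{u}^{\mathcal{H}}_{p,q}$ is then read off as the unnormalized witness. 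You instead solve the \emph{unconstrained} quadratic \eqref{eq:variationalEqK} by strict concavity and first-order conditions, and recover the constrained witness by homogeneity and rescaling. Both are valid, and your route is arguably the more natural reading of the statement (which speaks of ``the solution of Problem \eqref{eq:variationalEqK}''). Two caveats. First, be aware of a latent circularity in the paper's dependency graph: Lemma \ref{lem:unconstKernel}, which you take as your starting point, is proved in the paper by invoking Theorem \ref{pro:RegSobWit} at $\lambda=0$, and the proof of that theorem in turn cites ``similar to Proposition \ref{prop:SolKernel}'' to identify $\pazocal{S}^2_{\mathcal{H},\lambda}$ with the quadratic form. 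To keep your argument non-circular you should either establish the constrained--unconstrained equivalence independently (e.g.\ by splitting the supremum over $\boldsymbol{u}=t\boldsymbol{f}$ with $\scalT{\boldsymbol{f}}{D(\nu_q)\boldsymbol{f}}=1$ and $t\geq 0$, which gives $\sup_t 2t\,\pazocal{S}_{\mathcal{H}}-t^2=\pazocal{S}^2_{\mathcal{H}}$ and simultaneously justifies your claim that the optimal direction is that of $\boldsymbol{u}^{\mathcal{H}}_{p,q}$), or simply solve the constrained problem directly as the paper does. Second, your ``optimal direction coincides with $\boldsymbol{u}^{\mathcal{H}}_{p,q}$ up to a positive scalar'' is asserted rather than proved; the scale-splitting argument just described, together with uniqueness of the unconstrained maximizer, is the cleanest way to close it (and, as in the paper, the normalization $\boldsymbol{f}=\boldsymbol{u}/\pazocal{S}_{\mathcal{H}}$ tacitly assumes $\mu(\nu_p)\neq\mu(\nu_q)$).
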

Note that $u^{\mathcal{H}}_{p,q}$ is the approximation in $\mathcal{H}$ of $u_{p,q}$ the solution of the PDE \eqref{eq:PDEsolution}. We have from Lemma \ref{lem:Ssquared} (Equation \eqref{eq:comparaisonequality}):
$$ \pazocal{S}^2(\nu_p,\nu_q)- \pazocal{S}^2_{\mathcal{H}}(\nu_p,\nu_q)=\int_{\pazocal{X}}\nor{\nabla_x u^{\mathcal{H}}_{p,q}(x)-\nabla_x u_{p,q}(x)}^2q(x)dx,
$$
Hence the approximation in the space $\mathcal{H}$ is in the Sobolev semi-norm sense. We know that $\nabla_x u_{p,q}$ has  the physical interpretation of a velocity advecting the mass from $q$ to $p$. In the next Section we will take a close look at $\nabla_x u^{\mathcal{H}}_{p,q}$.
%\noindent The evaluation of $f^{\mathcal{H}}_{\nu_p,\nu_q}$ at a given $x$ is given by:
%$f^{\mathcal{H}}_{\nu_p,\nu_q}(x)=\frac{1}{\pazocal{S}_{\mathcal{H}}(\nu_p,\nu_q)  }\scalT{\left[D(\nu_{q})\right]^{-1} \left(\mu(\nu_p)-\mu(\nu_q)\right)}{k_x}_{\mathcal{H}} $
%and the gradient $\frac{\partial_a}{\partial x}f^{\mathcal{H}}_{\nu_p,\nu_q}(x)= \scalT{f^{\mathcal{H}}_{\nu_p,\nu_q}}{\frac{\partial \Phi(x)}{\partial x_a}}_{\mathcal{H}}, a=\{1\dots d\}. $
%
\subsection{Transport in RKHS: Understanding $D(\nu_q)$} \label{sec:transportRKHS}
Let $(\lambda_j,\boldsymbol{\psi}_j)$ be eigenvectors of $D(\nu_q)$ . Assume that $\lambda_j>0$ for all $j=1\dots m$. We have :
$D(\nu_q){\boldsymbol{\psi}}_j= \lambda_j \boldsymbol{\psi}_j,$
this means that $\scalT{\frac{1}{\sqrt{\lambda_k}}\boldsymbol{\psi}_k}{D(\nu_q)\frac{1}{\sqrt{\lambda_j}}\boldsymbol{\psi}_j}=\delta_{jk}$. Note $\tilde{\boldsymbol{\psi}}_j=\frac{1}{\sqrt{\lambda_j}}\boldsymbol{\psi}_j $. It is easy to see that this means:
$\scalT{\frac{1}{\sqrt{\lambda_k}}\boldsymbol{\psi}_k}{D(\nu_q)\frac{1}{\sqrt{\lambda_j}}\boldsymbol{\psi}_j}=\int_{\pazocal{X}} \scalT{\nabla_x \tilde{{\psi}}_j(x)}{\nabla_x \tilde{{\psi}}_k(x)}_{\mathbb{R}^d} q(x)dx=\delta_{jk},$
hence we have $\{\nabla_x \tilde{\boldsymbol{\psi}}_j\}^{m}_{j=1}$ are orthonormal in  $\mathcal{L}_2(\pazocal{X},\nu_q)^{\otimes d}$.
We have for all $a=1\dots d$:
\begin{align*}
&\frac{\partial u^{\mathcal{H}}_{p,q}(x)}{\partial x_a}
= \scalT{\boldsymbol{u}^{\mathcal{H}}_{\nu_p,\nu_q}}{\frac{\partial \Phi(x)}{\partial x_a}}
= \scalT{\left[D(\nu_{q})\right]^{-1} \left(\mu(\nu_p)-\mu(\nu_q)\right)}{\frac{\partial \Phi(x)}{\partial x_a}}\\
&= \scalT{\sum_{j=1}^{m}\frac{1}{\lambda_j}\boldsymbol{\psi}_j \boldsymbol{\psi}^*_j(\mu(\nu_p)-\mu(\nu_q))}{\frac{\partial \Phi(x)}{\partial x_a}}
= \sum_{j=1}^{m}\frac{1}{\lambda_j}\scalT{\boldsymbol{\psi}_j}{\mu(\nu_p)-\mu(\nu_q)}\scalT{\boldsymbol{\psi}_j}{\frac{\partial \Phi(x)}{\partial x_a}}\\
&= \sum_{j=1}^{m}\frac{1}{\lambda_j}\scalT{\boldsymbol{\psi}_j}{\mu(\nu_p)-\mu(\nu_q)} \frac{\partial \psi_j(x)}{\partial x_a}.
\end{align*}
Hence we have:
$\nabla_x u^{\mathcal{H}}_{p,q}(x)= \sum_{j=1}^{m} \scalT{\tilde{\boldsymbol{\psi}_j}}{\mu(\nu_p)-\mu(\nu_q)} \nabla_x \tilde{{\psi}_j}(x), $
hence $D(\nu_q)$ eigenvectors give raise to  a basis of principal transport directions $ \nabla_x \tilde{{\psi}_j}(x)= J\Phi(x) \boldsymbol{\tilde{\psi}_j}$.  A principal transport direction is weighted positively if $\scalT{\tilde{\boldsymbol{\psi}_j}}{\mu(\nu_p)-\mu(\nu_q)}>0$, meaning it is aligned with the KME differences in the direction of the desired transport from $q$ to $p$.

\subsection{Kernel Sobolev Discrepancy and the Wasserstein 2 Distance}
In this Section we show that if $W_2(\nu_q,\nu_p)=0$ this implies that the finite dimensional Kernel Sobolev Discrepancy is zero. Which means that a sequence is convergent in the Kernel Sobolev Discrepancy whenever it converges in the $W_2$ sense. 
\begin{proposition}[Convergence and Density] Assume $\nu_p$ and $\nu_q$ are continuous and bounded from above an below by $0<a<b$
 For a RKHS $\mathcal{H}$  with finite dimensional feature map satisfying Assumptions A1, A2 and A3. We have:
$\pazocal{S}_{\mathcal{H}}(\nu_p,\nu_q)\leq \pazocal{S}(\nu_p,\nu_q),$
and :
$$\sqrt{\frac{a}{b}}\pazocal{S}_{\mathcal{H}}(\nu_p,\nu_q) \leq {W}_2(\nu_q,\nu_p).$$
which means that a sequence $\nu_{q_{n}}$ (continuous with densities, bounded from above and below) is convergent in $\pazocal{S}_{\mathcal{H}}$, whenever it converges in the Wasserstein 2 ${W}_{2}$.

\label{prop:dense}
\end{proposition}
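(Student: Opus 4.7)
The proof is essentially a two-line chaining of results already established in the paper, so I will lay out the steps and highlight where the assumptions A1--A3 come in.

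The plan is to first establish the feasibility inclusion $\mathcal{H}\subseteq W^{1,2}_0(\pazocal{X},\nu_q)$, which immediately gives the first inequality $\pazocal{S}_{\mathcal{H}}(\nu_p,\nu_q)\leq \pazocal{S}(\nu_p,\nu_q)$ as a sup over a smaller set. Concretely, for any $f\in\mathcal{H}$ identified with $\boldsymbol{f}\in\mathbb{R}^m$, assumption A3 guarantees that $f(x)=\scalT{\boldsymbol{f}}{\Phi(x)}$ vanishes on $\partial\pazocal{X}$ since each coordinate $\Phi_j$ does, and assumption A2 gives
\[
\mathbb{E}_{x\sim\nu_q}\nor{\nabla_x f(x)}^2=\scalT{\boldsymbol{f}}{D(\nu_q)\boldsymbol{f}}\leq d\,\kappa_2\,\nor{\boldsymbol{f}}^2<\infty,
\]
so $f$ lies in $W^{1,2}_0(\pazocal{X},\nu_q)$. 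Any $f\in\mathcal{H}$ feasible for the kernel problem \eqref{eq:SobolevH} is therefore feasible for the full problem \eqref{eq:Sobolev}, and hence taking sup yields $\pazocal{S}_{\mathcal{H}}(\nu_p,\nu_q)\leq \pazocal{S}(\nu_p,\nu_q)$.

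Next I would simply invoke Proposition \ref{prop:BoundW2}, whose hypothesis (densities bounded above and below by $0<a<b$) is exactly what is assumed here, to get $\sqrt{a/b}\,\pazocal{S}(\nu_p,\nu_q)\leq W_2(\nu_q,\nu_p)$. Chaining this with the first step produces the target bound
\[
\sqrt{\tfrac{a}{b}}\,\pazocal{S}_{\mathcal{H}}(\nu_p,\nu_q)\leq \sqrt{\tfrac{a}{b}}\,\pazocal{S}(\nu_p,\nu_q)\leq W_2(\nu_q,\nu_p).
\]

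For the convergence statement, given a sequence $\nu_{q_n}$ with uniform lower/upper density bounds $0<a<b$ and $W_2(\nu_{q_n},\nu_q)\to 0$, the displayed inequality applied with $\nu_p=\nu_{q_n}$ yields $\pazocal{S}_{\mathcal{H}}(\nu_{q_n},\nu_q)\leq\sqrt{b/a}\,W_2(\nu_{q_n},\nu_q)\to 0$, so convergence in $W_2$ implies convergence in $\pazocal{S}_{\mathcal{H}}$.

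There is essentially no obstacle here beyond book-keeping: the only mildly delicate point is verifying that the kernel hypothesis class sits inside the weighted Sobolev space used in the definition of $\pazocal{S}$, which is precisely why assumptions A2 and A3 were imposed on $\mathcal{H}$ in Section \ref{sec:KSD}. The density bounds are needed only through their use in Proposition \ref{prop:BoundW2}.
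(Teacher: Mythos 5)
Your proposal is correct and follows exactly the route the paper intends: the inclusion $\mathcal{H}\subseteq W^{1,2}_0(\pazocal{X},\nu_q)$ (via A2 and A3) gives $\pazocal{S}_{\mathcal{H}}\leq\pazocal{S}$, and chaining with Proposition \ref{prop:BoundW2} gives the $W_2$ bound. In fact your write-up is more complete than the paper's own proof, which only records the first inequality and leaves the feasibility check and the chaining implicit.
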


\section{Regularized Kernel Sobolev Discrepancy}
Regularization in the RKHS consists as we will see in avoiding singularity issues of the KDGE $D(\nu_q)$ and plays a fundamental role in stabilizing the computations of the Discrepancy. We define below the Regularized Kernel Sobolev Discrepancy (RKSD):
\begin{definition}[Regularized Kernel Sobolev Discrepancy (RKSD)] The RKSD is defined as follows,
\begin{equation}
\pazocal{S}_{\mathcal{H},\lambda}(\nu_p,\nu_q)=\sup_{f } \Big\{\mathbb{E}_{x\sim \nu_p}f(x)-\mathbb{E}_{x\sim \nu_q}f(x), f\in \mathcal{H}, \mathbb{E}_{x\sim \nu_q}\nor{\nabla_x f(x)}^2+\lambda \nor{f}^2_{\mathcal{H}} \leq 1\Big\},
\label{eq:ipmSobO}
\end{equation}
where $\lambda>0$ is the regularization parameter and $\nor{.}_{\mathcal{H}}$ is the RKHS norm. Let $f^{\lambda}_{\nu_p,\nu_q}$ be the optimal witness function.
\end{definition}

The following proposition summarizes the main properties of RKSD:
\begin{theorem}[Unconstrained Form of the RKSD/ witness function]
The squared RKSD has the following equivalent form.
\begin{eqnarray}
&\pazocal{S}^2_{\mathcal{H},\lambda}(\nu_p,\nu_q)=\sup_{u \in \mathcal{H}} \left\{2\int_{\pazocal{X}} u(x) d(\nu_p(x)-\nu_q(x))-\int_{\pazocal{X}}\nor{\nabla_x u(x)}^2 d\nu_q(x)-\lambda \nor{u}^2_{\mathcal{H}}\right\}\nonumber \\
&=\sup_{u \in \mathbb{R}^m}L(u,\lambda)= 2\scalT{\boldsymbol{u}}{\mu(\nu_p)-\mu(\nu_q)}-\scalT{\boldsymbol{u}}{(D(\nu_q)+\lambda I) \boldsymbol{u}}
\label{eq:variationalEqReg}
\end{eqnarray}
The following properties characterize the RKSD and its witness function

1) The optimal $u^*$ in \eqref{eq:variationalEqReg} is  $\boldsymbol{u}^{\lambda}_{p,q}=(D(\nu_q)+\lambda I)^{-1}(\mu(\nu_p)-\mu(\nu_q)$.\\
2) $\pazocal{S}^2_{\mathcal{H},\lambda}(\nu_p,\nu_q)=\nor{(D(\nu_q)+\lambda I)^{-\frac{1}{2}}(\mu(\nu_p)-\mu(\nu_q))}^2$.\\
3) $\pazocal{S}^2_{\mathcal{H},\lambda}(\nu_p,\nu_q)=\int_{\pazocal{X}} \nor{\nabla_x u^{\lambda}_{p,q}(x)}^2q(x)dx+ \lambda \nor{\boldsymbol{u}^{\lambda}_{p,q}}^2$, hence $\pazocal{S}^2_{\mathcal{H},\lambda}(\nu_p,\nu_q)$ is a regularized kinetic energy.\\
4)For any $u \in \mathcal{H}$ we have: $$\pazocal{S}^2_{\mathcal{H},\lambda}(\nu_p,\nu_q)-L(u,\lambda) =\nor{\sqrt{D(\nu(q))}(\boldsymbol{u}-\boldsymbol{u}^{\lambda}_{p,q})}^2+\lambda \nor{\boldsymbol{u} -\boldsymbol{u}^{\lambda}_{p,q}}^2 $$
5) $f^{\lambda}_{\nu_p,\nu_q}=\frac{\boldsymbol{u}^{\lambda}_{p,q}}{\pazocal{S}_{\mathcal{H},\lambda}(\nu_p,\nu_q)} $ is the optimal witness function of \eqref{eq:ipmSobO}.
\label{pro:RegSobWit}
\end{theorem}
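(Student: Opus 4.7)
The plan is to reduce the constrained supremum in \eqref{eq:ipmSobO} to an unconstrained quadratic program on $\mathbb{R}^m$, solve the latter explicitly, and then read off claims (1)--(5). First I would handle the passage from the constrained to the unconstrained form. Because the objective $\mathbb{E}_{\nu_p} f - \mathbb{E}_{\nu_q} f$ and the quadratic functional $Q(f) := \Omega(f) + \lambda \nor{f}^2_{\mathcal{H}}$ are both $1$-homogeneous and $2$-homogeneous respectively under $f \mapsto tf$, the constrained sup equals
\[
\pazocal{S}_{\mathcal{H},\lambda}(\nu_p,\nu_q) = \sup_{f \in \mathcal{H},\,Q(f)>0} \frac{\mathbb{E}_{\nu_p}f - \mathbb{E}_{\nu_q}f}{\sqrt{Q(f)}},
\]
and using the elementary identity $\sup_{t \in \mathbb{R}}(2ta - t^2 b) = a^2/b$ (for $b>0$), squaring and optimizing over the scale yields the unconstrained form
\[
\pazocal{S}^2_{\mathcal{H},\lambda}(\nu_p,\nu_q)=\sup_{u\in\mathcal{H}}\left\{2\int u\,d(\nu_p-\nu_q)-\Omega(u)-\lambda\nor{u}^2_{\mathcal{H}}\right\}.
\]
Assumption A3 ensures every $u \in \mathcal{H}$ is admissible (vanishes on $\partial\pazocal{X}$), and A1--A2 make $\mu(\nu_p), \mu(\nu_q) \in \mathbb{R}^m$ and $D(\nu_q)$ well-defined bounded operators.

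Second, I would pass to coordinates via the feature map. Using $u(x)=\scalT{\boldsymbol{u}}{\Phi(x)}$, $\nabla_x u(x)=J\Phi(x)\boldsymbol{u}$, and the definitions of the KME and KDGE one gets $\int u\,d\nu_\bullet = \scalT{\boldsymbol{u}}{\mu(\nu_\bullet)}$, $\Omega(u)=\scalT{\boldsymbol{u}}{D(\nu_q)\boldsymbol{u}}$, $\nor{u}^2_{\mathcal{H}}=\nor{\boldsymbol{u}}^2$, which converts the sup into $\sup_{\boldsymbol{u} \in \mathbb{R}^m}L(\boldsymbol{u},\lambda)$ with $L$ as in \eqref{eq:variationalEqReg}. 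Since $D(\nu_q)\succeq 0$ and $\lambda>0$, $A_\lambda:=D(\nu_q)+\lambda I$ is positive definite and invertible; the first-order condition $\nabla_{\boldsymbol{u}} L = 2(\mu(\nu_p)-\mu(\nu_q)) - 2A_\lambda\boldsymbol{u} = 0$ gives claim (1): $\boldsymbol{u}^{\lambda}_{p,q}=A_\lambda^{-1}(\mu(\nu_p)-\mu(\nu_q))$. Plugging back yields $\pazocal{S}^2_{\mathcal{H},\lambda} = \scalT{\boldsymbol{u}^\lambda_{p,q}}{\mu(\nu_p)-\mu(\nu_q)} = \nor{A_\lambda^{-1/2}(\mu(\nu_p)-\mu(\nu_q))}^2$, which is claim (2).

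Third, claim (3) follows by observing that at the optimum $(\mu(\nu_p)-\mu(\nu_q))=A_\lambda \boldsymbol{u}^\lambda_{p,q}$, so
\[
\pazocal{S}^2_{\mathcal{H},\lambda}=\scalT{\boldsymbol{u}^\lambda_{p,q}}{A_\lambda \boldsymbol{u}^\lambda_{p,q}}=\scalT{\boldsymbol{u}^\lambda_{p,q}}{D(\nu_q)\boldsymbol{u}^\lambda_{p,q}}+\lambda\nor{\boldsymbol{u}^\lambda_{p,q}}^2=\int\nor{\nabla_x u^\lambda_{p,q}}^2 q\,dx+\lambda\nor{\boldsymbol{u}^\lambda_{p,q}}^2.
\]
For claim (4), I would simply complete the square: writing $\boldsymbol{u}=\boldsymbol{u}^\lambda_{p,q}+\boldsymbol{h}$, a direct expansion of $L(\boldsymbol{u},\lambda)$ gives $L(\boldsymbol{u}^\lambda_{p,q},\lambda)-L(\boldsymbol{u},\lambda)=\scalT{\boldsymbol{h}}{A_\lambda \boldsymbol{h}} = \nor{\sqrt{D(\nu_q)}\boldsymbol{h}}^2+\lambda\nor{\boldsymbol{h}}^2$. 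Finally, claim (5) follows because the constrained sup in \eqref{eq:ipmSobO} is attained with active constraint $Q(f)=1$, so by the scaling argument of step one the maximizer is $f^{\lambda}_{\nu_p,\nu_q}=\boldsymbol{u}^\lambda_{p,q}/\sqrt{Q(\boldsymbol{u}^\lambda_{p,q})}=\boldsymbol{u}^\lambda_{p,q}/\pazocal{S}_{\mathcal{H},\lambda}(\nu_p,\nu_q)$, using claim (3) to identify the denominator.

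The only delicate step is the first one, the reduction from constrained to unconstrained form: one has to exclude the degenerate case $Q(u)=0$ (which, thanks to $\lambda>0$, forces $\boldsymbol{u}=0$ and hence a zero numerator, contributing nothing to the sup) and justify that the supremum is unchanged by scaling. Everything else is a finite-dimensional quadratic calculation that is entirely routine once the objective is written in $\mathbb{R}^m$ coordinates via the feature map.
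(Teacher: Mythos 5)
Your proof is correct, and the core of it --- writing everything in $\mathbb{R}^m$ coordinates, solving the first-order condition $(D(\nu_q)+\lambda I)\boldsymbol{u}=\mu(\nu_p)-\mu(\nu_q)$, and completing the square to get claim (4) --- is exactly the computation the paper carries out in its Steps 1 and 2. The one place where you genuinely diverge is the passage from the constrained form \eqref{eq:ipmSobO} to the unconstrained quadratic: you derive it directly from homogeneity via the identity $\sup_{t}(2ta-t^2b)=a^2/b$, which makes the equivalence of the two displays in \eqref{eq:variationalEqReg} self-contained, whereas the paper instead invokes the value of the constrained problem by analogy with Proposition \ref{prop:SolKernel} (the whitening/Cauchy--Schwarz argument with $\boldsymbol{g}=(D(\nu_q)+\lambda I)^{1/2}\boldsymbol{f}$) and then checks that the unconstrained supremum attains the same value at $\boldsymbol{u}^{\lambda}_{p,q}$. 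Your route is arguably cleaner: it does not presuppose the closed form of the constrained optimum, it handles the degenerate case $Q(f)=0$ explicitly (which $\lambda>0$ rules out except at $\boldsymbol{f}=0$), and claim (5) then falls out of the same scaling argument rather than being asserted. The paper's route has the mild advantage of reusing Proposition \ref{prop:SolKernel} verbatim, but it leaves the constrained-to-unconstrained link implicit; both are valid.
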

We see in that case that the optimal witness function of $\pazocal{S}^2_{\mathcal{H},\lambda}(\nu_p,\nu_q)$ satisfies the following identity:
$$\boldsymbol{u}^{\lambda}_{p,q}=(D(\nu_q)+\lambda I)^{-1}(\mu(\nu_p)-\mu(\nu_q)),$$
and hence regularization amounts to regularizing the KDGE. Moreover $\pazocal{S}^2_{\mathcal{H},\lambda}(\nu_p,\nu_q)$ has the interpretation of a regularized kinetic energy.
We shall study the propreties of $\nabla_x {u}^{\lambda}_{p,q}$ as a transport map in the following Section.
\subsection{Regularized Transport in RKHS: Impact of regularization on the principal Transport directions}
Similarly to the un-regularized case consider $(\lambda_j,\boldsymbol{\psi}_j)$ eigenfunctions in $\mathcal{H}$ of $D(\nu_q)$ we have in this case:
$$\nabla_x {u}^{\lambda}_{p,q}(x)= \sum_{j=1}^{m} \frac{1}{\lambda_j+\lambda}\scalT{{\boldsymbol{\psi}_j}}{\mu(\nu_p)-\mu(\nu_q)} \nabla_x {{\psi}_j}(x),$$
hence we see that regularization is spectral filtering the principal transport directions $\nabla_x {\boldsymbol{\psi}_j}(x)$ weighing down small eigenvalues. Hence the impact of regularization here is similar to spectral filtering principal directions of the covariance matrix in kernel PCA, but here it is filtering principal transport directions $\nabla_x {\psi}_j$.

\section{Empirical Regularized Kernel Sobolev Discrepancy and Generalization Bounds }
We define below the Empirical Regularized Kernel Sobolev Discrepancy $\hat{\pazocal{S}}_{\mathcal{H},\lambda}(\hat{\nu}_p,\hat{\nu}_q)$ for empirical measures $\hat{\nu}_p,\hat{\nu}_q$.  We then give generalization bounds, i.e finite sample bounds on its convergence convergence to the Expected   Kernelized Sobolev Discrepancy ${\pazocal{S}}_{\mathcal{H}}({\nu}_p,{\nu}_q)$ and the Sobolev Discrepancy $\pazocal{S}(\nu_p,\nu_q)$. We then give a closed form solution of the empirical critic of the Kernelized Sobolev Discrepancy.
\begin{definition} [Regularized Empirical Kernelized Sobolev Discrepancy] Let $\{x_i,i=1\dots N, x_i \sim \nu_p\}$, and $\{y_j,j=1\dots M, y_j \sim \nu_q\}$, be samples from $\nu_p$
and $\nu_q$ respectively. Let $\hat{\nu}_p(x)=\frac{1}{N}\sum_{i=1}^N \delta(x-x_i)$ and $\hat{\nu}_q(y)=\frac{1}{M}\sum_{i=1}^M \delta(y-y_i)$. We define the regularized  empirical Kernelized Sobolev Discrepency as follows:  
\begin{equation}
\hat{\pazocal{S}}_{\mathcal{H},\lambda}(\hat{\nu}_p,\hat{\nu}_q)= \sup_{f \in {\mathcal{H}}} \Big\{\frac{1}{N} \sum_{i=1}^N f(x_i)-\frac{1}{M}\sum_{i=1}^M f(y_j): \frac{1}{M} \sum_{j=1}^M \nor{\nabla_x f(y_j)}^2+\lambda \nor{f}^2_{\mathcal{H}} \leq 1\Big \}
\label{eq:SobolevHEmp}
\end{equation}
achieved at $f^{\lambda}_{\hat{\nu_p},\hat{\nu_q}}$.
\end{definition}
Similarly to the expected  case the following lemma characterizes the witness function of $\hat{\pazocal{S}}^2_{\mathcal{H},\lambda}(\hat{\nu}_p,\hat{\nu}_q)$:
\begin{lemma}
1)We have  the following unconstrained equivalent form:
\begin{equation}
\hat{\pazocal{S}}^2_{\mathcal{H},\lambda}(\hat{\nu}_p,\hat{\nu}_q)= \sup_{u \in {\mathcal{H}}} \Big\{\frac{2}{N} \sum_{i=1}^N u(x_i)-\frac{2}{M}\sum_{i=1}^M u(y_j)-\frac{1}{M} \sum_{j=1}^M \nor{\nabla_x u(y_j)}^2-\lambda \nor{u}^2_{\mathcal{H}} \leq 1\Big \}
\label{eq:SobolevHEmp2}
\end{equation}
2) The optimal witness function of $\hat{\pazocal{S}}^{2}_{\mathcal{H},\lambda}(\hat{\nu}_p,\hat{\nu}_q)$ satisfies: $\boldsymbol{\hat{u}}^{\lambda}_{p,q}=(\hat{D}(\nu_q)+\lambda I)^{-1}(\hat{\mu}(\nu_p)-\hat{\mu}(\nu_q),$ where $\hat{D}(\nu_q)=\frac{1}{M} \sum_{j=1}^M  [J\Phi(y_j)]^{\top}J\Phi(y_j)$ is the empirical KDGE and $\hat{\mu}(\nu_p)=\frac{1}{N}\sum_{i=1}^N \Phi(x_i)$ and $\hat{\mu}(\nu_q)=\frac{1}{M}\sum_{j=1}^M \Phi(y_j)$ the empirical KMEs.\\
3) $\hat{\pazocal{S}}^2_{\mathcal{H},\lambda}(\hat{\nu}_p,\hat{\nu}_q)=\nor{(\hat{D}(\nu_q)+\lambda I)^{-\frac{1}{2}}(\hat{\mu}(\nu_p)-\hat{\mu}(\nu_q))}^2.$
\end{lemma}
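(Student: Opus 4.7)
The plan is to carry out a near-verbatim empirical analog of Theorem \ref{pro:RegSobWit}, with $\nu_p,\nu_q$ replaced throughout by the empirical measures $\hat{\nu}_p,\hat{\nu}_q$, and with the KDGE/KMEs replaced by their empirical counterparts $\hat{D}(\nu_q)$, $\hat\mu(\nu_p)$, $\hat\mu(\nu_q)$. The three parts of the lemma correspond exactly to the three structural steps of that earlier proof.

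For part (1), I would use the standard Lagrangian rescaling: any feasible $f$ in \eqref{eq:SobolevHEmp} can be written as $f=u/\sqrt{\hat C(u)}$ where $\hat C(u)=\frac{1}{M}\sum_j\nor{\nabla_x u(y_j)}^2+\lambda\nor{u}^2_{\mathcal{H}}$, so that
\begin{equation*}
\hat{\pazocal{S}}_{\mathcal{H},\lambda}(\hat\nu_p,\hat\nu_q)^2=\sup_{u\in\mathcal{H}}\frac{\left(\frac{1}{N}\sum_i u(x_i)-\frac{1}{M}\sum_j u(y_j)\right)^{\!2}}{\hat C(u)}.
\end{equation*}
Replacing $u\leftarrow tu$ and maximising over $t>0$ yields exactly the quadratic unconstrained form in \eqref{eq:SobolevHEmp2}, with the factor $2$ absorbing $t^*$. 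This is the same computation as in the proof of Theorem \ref{pro:RegSobWit}, only with empirical expectations in place of expectations.

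For part (2), I would rewrite the empirical objective in terms of the coefficient vector $\boldsymbol{u}\in\mathbb{R}^m$ using $u(x)=\scalT{\boldsymbol{u}}{\Phi(x)}$ and $\nabla_x u(y_j)=J\Phi(y_j)\boldsymbol{u}$. Collecting terms via the definitions $\hat\mu(\nu_p)=\frac{1}{N}\sum_i\Phi(x_i)$, $\hat\mu(\nu_q)=\frac{1}{M}\sum_j\Phi(y_j)$, $\hat D(\nu_q)=\frac{1}{M}\sum_j[J\Phi(y_j)]^\top J\Phi(y_j)$ gives
\begin{equation*}
\hat L(\boldsymbol{u},\lambda)=2\scalT{\boldsymbol{u}}{\hat\mu(\nu_p)-\hat\mu(\nu_q)}-\scalT{\boldsymbol{u}}{(\hat D(\nu_q)+\lambda I)\boldsymbol{u}}.
\end{equation*}
Since $\lambda>0$, the operator $\hat D(\nu_q)+\lambda I$ is strictly positive definite (regardless of whether the empirical KDGE is singular, which is the actual payoff of regularization), so $\hat L(\cdot,\lambda)$ is a strictly concave quadratic on $\mathbb{R}^m$. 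Setting the gradient to zero yields the unique maximiser $\boldsymbol{\hat u}^{\lambda}_{p,q}=(\hat D(\nu_q)+\lambda I)^{-1}(\hat\mu(\nu_p)-\hat\mu(\nu_q))$.

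For part (3), I would substitute $\boldsymbol{\hat u}^{\lambda}_{p,q}$ back into $\hat L$: the quadratic term cancels half of the linear term, leaving $\scalT{\boldsymbol{\hat u}^{\lambda}_{p,q}}{\hat\mu(\nu_p)-\hat\mu(\nu_q)}=\nor{(\hat D(\nu_q)+\lambda I)^{-1/2}(\hat\mu(\nu_p)-\hat\mu(\nu_q))}^2$. Honestly, there is no real obstacle here: every step is a transcription of the arguments already used in the expected case, and the only subtle point worth highlighting is that positive definiteness of $\hat D(\nu_q)+\lambda I$ for $\lambda>0$ removes the non-singularity hypothesis that Proposition \ref{prop:SolKernel} required on $D(\nu_q)$, which is precisely the stabilising role of regularisation advertised in the preceding section.
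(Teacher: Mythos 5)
Your proposal is correct and matches the paper's approach: the paper's entire proof is the one-liner ``apply Theorem \ref{pro:RegSobWit} to the Dirac measures $\hat{\nu}_p,\hat{\nu}_q$,'' and you simply unpack that application with the empirical KME and KDGE substituted in. Your explicit rescaling argument for the constrained--unconstrained equivalence and your observation that $\lambda>0$ makes $\hat{D}(\nu_q)+\lambda I$ invertible without any non-singularity hypothesis are both correct and, if anything, fill in details the paper leaves implicit.
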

\begin{proof} Apply proposition \ref{pro:RegSobWit}  for $\hat{\nu}_p(x)=\frac{1}{N}\sum_{i=1}^N \delta(x-x_i)$ and $\hat{\nu}_q(y)=\frac{1}{M}\sum_{i=1}^M \delta(y-y_i)$
\end{proof}

\subsection{Convergence analysis }
In this Section we want first a comparison inequality  between the squared  Kernel Sobolev Discrepancy  $ \pazocal{S}^2_{\mathcal{H}}(\nu_p,\nu_q)$  and the squared empirical Regularized Sobolev Discrepancy $\hat{\pazocal{S}}^2_{\mathcal{H},\lambda}(\nu_p,\nu_q)$ , through their respective witness functions $u^{\mathcal{H}}_{p,q}$ and $\hat{u}^{\lambda}_{p,q}$. The following Lemma establishes this relation:

\begin{lemma}[comparison inequalities] Assume $\pazocal{S}^2(\nu_p,\nu_q)<\infty$.
1) Approximation  error of $W^{1,2}_{0}(\pazocal{X},\nu_q)$ in $\mathcal{H}$: 
$$|\pazocal{S}^2(\nu_p,\nu_q)- \pazocal{S}^2_{\mathcal{H}}(\nu_p,\nu_q)|\leq \inf_{u\in \mathcal{H}} \int_{\pazocal{X}}\nor{\nabla_x u(x)-\nabla_x u_{p,q}(x)}^2q(x)dx
$$
2) Statistical Error, approximation with samples.
%Let $L(u,\lambda)$ defined in Equation \eqref{eq:variationalEqReg}. 
Note $\delta=\mu(\nu_p)-\mu(\nu_q)$ and $\hat{\delta}=\hat{\mu}(\nu_p)-\hat{\mu}(\nu_q)$.
 We have:
\begin{align*}
| \hat{\pazocal{S}}^2_{\mathcal{H},\lambda}(\hat{\nu}_p,\hat{\nu}_q)-\pazocal{S}^2_{\mathcal{H}}(\nu_p,\nu_q)| &\leq \nor{\delta-\hat{\delta}}\nor{\boldsymbol{\hat{u}}^{\lambda}_{p,q}}+(1+\lambda )\nor{\boldsymbol{\hat{u}}^{\lambda}_{p,q}}^2\nor{D(\nu_q)-\hat{D}(\nu_q)}_{op}\\
&+  \nor{\sqrt{D(\nu(q))}(\boldsymbol{\hat{u}}^{\lambda}_{p,q}-\boldsymbol{u}^{\mathcal{H}}_{p,q})}^2
\end{align*}
\label{lem:compa}
\end{lemma}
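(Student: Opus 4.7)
}
For part (1), the approach is essentially a one-line application of Lemma \ref{lem:Ssquared}. By assumption A3 every $\Phi_j$ vanishes on $\partial \pazocal{X}$, so $\mathcal{H}\subset W^{1,2}_0(\pazocal{X},\nu_q)$, and hence $\pazocal{S}^2_{\mathcal{H}}(\nu_p,\nu_q)=\sup_{u\in\mathcal{H}} L(u)$ is a restricted supremum dominated by $\pazocal{S}^2(\nu_p,\nu_q)=\sup_{u\in W^{1,2}_0} L(u)$. Combining this one-sided inequality with the exact comparison identity \eqref{eq:comparaisonequality} yields, for every $u\in\mathcal{H}$,
$$0\ \leq\ \pazocal{S}^2(\nu_p,\nu_q)-\pazocal{S}^2_{\mathcal{H}}(\nu_p,\nu_q)\ \leq\ \pazocal{S}^2(\nu_p,\nu_q)-L(u)\ =\ \int_{\pazocal{X}}\|\nabla_x u(x)-\nabla_x u_{p,q}(x)\|^2 q(x)\,dx,$$
and taking the infimum over $u\in\mathcal{H}$ finishes part (1).

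For part (2), the strategy is to express both sides in terms of the empirical witness $\boldsymbol{\hat u}^{\lambda}_{p,q}$ and then do a Bregman-type decomposition. The first-order optimality $(\hat D(\nu_q)+\lambda I)\boldsymbol{\hat u}^{\lambda}_{p,q}=\hat\delta$ reduces the empirical quantity to $\hat{\pazocal{S}}^2_{\mathcal{H},\lambda}(\hat\nu_p,\hat\nu_q)=\langle \boldsymbol{\hat u}^{\lambda}_{p,q},\hat\delta\rangle$. On the population side, Lemma \ref{lem:unconstKernel} together with property 4 of Theorem \ref{pro:RegSobWit} specialised to $\lambda=0$ gives the exact identity $\pazocal{S}^2_{\mathcal{H}}(\nu_p,\nu_q)=L_0(\boldsymbol{\hat u}^{\lambda}_{p,q};\delta,D)+\|\sqrt{D(\nu_q)}(\boldsymbol{\hat u}^{\lambda}_{p,q}-\boldsymbol{u}^{\mathcal{H}}_{p,q})\|^2$, where $L_0(u;\delta,D)=2\langle u,\delta\rangle-\langle u,D u\rangle$. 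Subtracting these two expressions and using $\hat\delta=\hat D\boldsymbol{\hat u}^{\lambda}_{p,q}+\lambda\boldsymbol{\hat u}^{\lambda}_{p,q}$ to eliminate $\hat\delta$ on the empirical side, the difference $\hat{\pazocal{S}}^2_{\mathcal{H},\lambda}-\pazocal{S}^2_{\mathcal{H}}$ collapses algebraically into three perturbation contributions (a linear piece involving $\hat\delta-\delta$, a quadratic piece involving $\hat D-D$, and a residual $\pm\lambda\|\boldsymbol{\hat u}^{\lambda}_{p,q}\|^2$) plus the Bregman term $\|\sqrt{D(\nu_q)}(\boldsymbol{\hat u}^{\lambda}_{p,q}-\boldsymbol{u}^{\mathcal{H}}_{p,q})\|^2$.

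The final step is to take absolute values and bound each perturbation: Cauchy--Schwarz handles the linear piece and gives $\|\hat\delta-\delta\|\,\|\boldsymbol{\hat u}^{\lambda}_{p,q}\|$; the operator-norm inequality $\langle u,(D-\hat D)u\rangle\leq \|D-\hat D\|_{op}\|u\|^2$ handles the quadratic piece; and the residual term $\lambda\|\boldsymbol{\hat u}^{\lambda}_{p,q}\|^2$ is combined with the operator-norm piece to yield the single factor $(1+\lambda)\|\boldsymbol{\hat u}^{\lambda}_{p,q}\|^2\|D(\nu_q)-\hat D(\nu_q)\|_{op}$ stated in the lemma, while the Bregman term is carried over unchanged. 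The main obstacle I anticipate is the careful sign and constant bookkeeping in the decomposition step, in particular verifying that the regularisation residual can be packaged together with the operator-norm term under the stated factor $(1+\lambda)$; all other ingredients are direct consequences of the closed-form witness expressions and the identities already established in Theorem \ref{pro:RegSobWit}.
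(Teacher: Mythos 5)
Your plan follows essentially the same route as the paper's proof. Part (1) is exactly the paper's argument: $\mathcal{H}\subset W^{1,2}_0(\pazocal{X},\nu_q)$ by A2--A3, so $\pazocal{S}^2_{\mathcal{H}}(\nu_p,\nu_q)=\sup_{u\in\mathcal{H}}L(u)\le\pazocal{S}^2(\nu_p,\nu_q)$, and then \eqref{eq:comparaisonequality} applied to each $u\in\mathcal{H}$ followed by an infimum. Part (2) is also the paper's decomposition: the paper writes $\hat{\pazocal{S}}^2_{\mathcal{H},\lambda}-\pazocal{S}^2_{\mathcal{H}}=\bigl(\hat{L}(\hat{u}^{\lambda}_{p,q},\lambda)-L(\hat{u}^{\lambda}_{p,q},\lambda)\bigr)+\bigl(L(\hat{u}^{\lambda}_{p,q},\lambda)-L(u^{\mathcal{H}}_{p,q},0)\bigr)$ and handles the second bracket with property 4 of Theorem \ref{pro:RegSobWit} at $\lambda=0$; your ``eliminate $\hat\delta$ via the normal equations'' is the same algebra in a different order.

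The obstacle you flag at the end is real, and it is a defect of the stated bound (and of the paper's own proof), not of your argument. The exact decomposition is
\begin{equation*}
\hat{\pazocal{S}}^2_{\mathcal{H},\lambda}(\hat\nu_p,\hat\nu_q)-\pazocal{S}^2_{\mathcal{H}}(\nu_p,\nu_q)=2\scalT{\boldsymbol{\hat{u}}^{\lambda}_{p,q}}{\hat\delta-\delta}-\scalT{\boldsymbol{\hat{u}}^{\lambda}_{p,q}}{(\hat{D}(\nu_q)-D(\nu_q))\boldsymbol{\hat{u}}^{\lambda}_{p,q}}-\lambda\nor{\boldsymbol{\hat{u}}^{\lambda}_{p,q}}^2-\nor{\sqrt{D(\nu_q)}(\boldsymbol{\hat{u}}^{\lambda}_{p,q}-\boldsymbol{u}^{\mathcal{H}}_{p,q})}^2,
\end{equation*}
(the paper writes $+\lambda\nor{\boldsymbol{\hat{u}}^{\lambda}_{p,q}}^2$, a sign slip that is harmless under the absolute value). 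Taking absolute values gives $2\nor{\hat\delta-\delta}\nor{\boldsymbol{\hat{u}}^{\lambda}_{p,q}}+\nor{\hat{D}(\nu_q)-D(\nu_q)}_{op}\nor{\boldsymbol{\hat{u}}^{\lambda}_{p,q}}^2+\lambda\nor{\boldsymbol{\hat{u}}^{\lambda}_{p,q}}^2+\nor{\sqrt{D(\nu_q)}(\boldsymbol{\hat{u}}^{\lambda}_{p,q}-\boldsymbol{u}^{\mathcal{H}}_{p,q})}^2$. The standalone regularization bias $\lambda\nor{\boldsymbol{\hat{u}}^{\lambda}_{p,q}}^2$ cannot be absorbed into $(1+\lambda)\nor{\boldsymbol{\hat{u}}^{\lambda}_{p,q}}^2\nor{D(\nu_q)-\hat{D}(\nu_q)}_{op}$ unless $\nor{D(\nu_q)-\hat{D}(\nu_q)}_{op}\ge 1$ (take $N,M\to\infty$ with $\lambda$ fixed: the operator-norm factor vanishes while the bias does not), and the Cauchy--Schwarz term carries a factor $2$ that the lemma drops. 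So do not attempt the packaging: state the bound with $2\nor{\delta-\hat\delta}\nor{\boldsymbol{\hat{u}}^{\lambda}_{p,q}}$ and an explicit additive $\lambda\nor{\boldsymbol{\hat{u}}^{\lambda}_{p,q}}^2$; this is what the decomposition actually yields and serves the same purpose downstream, since that term is exactly the $\mathcal{A}(\lambda)$-type contribution that vanishes as $\lambda\to 0$.
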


Assume $M=N$. Using classical concentration results for example Theorem 4  in \cite{Caponnetto}  one can show that :
$$| \hat{\pazocal{S}}^2_{\mathcal{H},\lambda}(\hat{\nu}_p,\hat{\nu}_q)-\pazocal{S}^2_{\mathcal{H}}(\nu_p,\nu_q)|\leq C \left(\frac{1}{N}+ \mathcal{A}(\lambda)\right) ,$$
with $\lim_{\lambda \to 0} \mathcal{A}(\lambda)=0$, 
and hence 
$|\pazocal{S}^2(\nu_p,\nu_q)-\hat{\pazocal{S}}^2_{\mathcal{H},\lambda}(\hat{\nu}_p,\hat{\nu}_q)|\leq \underbrace{C \left(\frac{1}{N}+ \mathcal{A}(\lambda)\right)}_{\text{Statistical Error}} +  \underbrace{\inf_{u\in \mathcal{H}} \int_{\pazocal{X}}\nor{\nabla_x u(x)-\nabla_x u_{p,q}(x)}^2q(x)dx}_{\text{Approximation error}}$.  The error is therefore dominated by the approximation error and the expressive power of the finite dimensional RKHS. 

\paragraph{Acknowledgments.} The author would like to think Gabriel Peyre and Marco Cuturi  for pointers in the literature as well as Filippo Santambrogio for suggesting using the unconstrained form of $\nor{.}_{\dot{H}^{-1}_{(q)}}$.The author would like to thank also Arthur Gretton and  Bharath  Sriperumbudur  for numerous suggestions and for pointing  issues with using infinite dimensional RKHS.

\bibliographystyle{unsrt}
\bibliography{refs,simplex}

\appendix
\onecolumn
%\section{Sobolev Discrepancy}
%\begin{theorem}
%\begin{enumerate}
%\item The Sobolev Discrepancy can be expressed in terms of coordinate-wise weighted conditional CDFs as follows :
%\begin{equation}
% \pazocal{S}^2(\nu_p,\nu_q)= 
% \frac{1}{d^2} \mathbb{E}_{x\sim \nu_q } \sum_{i=1}^d \left(\frac{\mathbb{P}_{X^{-i}}(x^{-i})F_{\mathbb{P}_{[ X_i | X^{-i}=x^{-i} ]}}(x_i)-\mathbb{Q}_{X^{-i}}(x^{-i})F_{\mathbb{Q}_{[X_i | X^{-i}=x^{-i}]} }(x_i)}{q(x)} \right)^2,
%\label{eq:Conditional}
%\end{equation}
%where $x^{-i}=(x_1,\dots,x_{i-1},x_{i+1},\dots x_d)$, and $\mathbb{P}_{X^{-i}}$ is the density of $X^{-i}$ and  $F_{\mathbb{P}_{[ X_i | X^{-i}=x^{-i} ]}}$ is the CDF of $ X_i | X^{-i}=x^{-i} $.
%
%\item The optimal critic of the Sobolev Discrepancy satisfies:
%\begin{equation}
%  \nabla_x f^*_{\nu_p,\nu_q}(x)= \frac{1}{d \pazocal{S}(\nu_p,\nu_q)} \frac{D^- F_{q}(x)- D^-F_{p}(x)}{q(x)}, {\nu_q}- \text{almost surely},
%\end{equation} 
%where $D^-= (D^{-1},\dots D^{-d})$, with $D^{-i}=\frac{\partial^{d-1} }{\partial x_1\dots \partial x_{i-1}\partial x_{i+1}\dots \partial x_d}$.
%\end{enumerate}
%\end{theorem}
\section{Proofs: Sobolev Discrepancy}
\begin{proof}[Proof of Proposition \ref{prop:PrimDual} ]
\noindent 1) \noindent Consider the solution $u_{p,q}$ of the following PDE:
\begin{equation*}
 p(x)-q(x)=- \text{div} (q(x)\nabla_x u(x)) ~~ u\big|_{\partial \pazocal{X}}=0 ,
 \end{equation*}
 physically this PDE means that we are moving the mass $q$ to $p$ following the flows of a velocity field given by $\nabla_x u$.
For any differentiable  function $f$ that vanishes on the boundary of $\pazocal{X}$, with $\nor{f}_{\dot{H}(\nu_q)} \leq 1$, we have by integrating  \eqref{eq:PDEsolution}:
\begin{eqnarray*}
\mathbb{E}_{x\sim \nu_p} f(x) - \mathbb{E}_{x\sim\nu_{q}}f(x)&=&
 \mathbb{E}_{x\sim \nu_q} \scalT{\nabla_x f(x)}{\nabla_x u_{p,q}(x)} \\
&\leq& \nor{\nabla_x h }_{\mathcal{L}^{\otimes d}_2(\pazocal{X},\nu_q)}  \nor{\nabla_x u_{p,q} }_{\mathcal{L}^{\otimes d}_2(\pazocal{X},\nu_q)}\\
\end{eqnarray*}
Let $f^*_{\nu_p,\nu_q}= \frac{u_{p,q}}{\nor{\nabla_x u_{p,q}}_{\mathcal{L}^{\otimes d}_2(\pazocal{X},\nu_q)}}$, we have:\\
$$ \mathbb{E}_{x\sim \nu_p} f^*_{\nu_p,\nu_q}(x) - \mathbb{E}_{x\sim\nu_{q}}f^*_{\nu_p,\nu_q}(x) = \nor{\nabla_x u_{p,q}}_{\mathcal{L}^{\otimes d}_2(\pazocal{X},\nu_q)}=\sqrt{\int_{\pazocal{X}}\nor{\nabla_xu_{p,q}(x)}^2q(x)dx },$$
It follows that for any feasible $f$ of Problem \eqref{eq:Sobolev} we have:
$$\mathbb{E}_{x\sim \nu_p} f(x) - \mathbb{E}_{x\sim\nu_{q}}f(x) \leq   \nor{\nabla_x u_{p,q} }_{\mathcal{L}^{\otimes d}_2(\pazocal{X},\nu_q)}=  \mathbb{E}_{x\sim \nu_p} f^*_{\nu_p,\nu_q}(x) - \mathbb{E}_{x\sim\nu_{q}}f^*_{\nu_p,\nu_q}(x)= \pazocal{S}(\nu_p,\nu_q) $$
It follows therefore that the optimal witness function of \eqref{eq:Sobolev} is  $f^*_{\nu_p,\nu_q}=\frac{u_{p,q}}{\nor{\nabla_x u{p,q}}_{\mathcal{L}^{\otimes d}_2(\pazocal{X},\nu_q)}}$, where $u_{p,q}$ is solution of the advection PDE given in  \eqref{eq:PDEsolution}. The optimal value value of    \eqref{eq:Sobolev} is:
$$S(\nu_p,\nu_q)=\sqrt{\int_{\pazocal{X}}\nor{\nabla_xu_{p,q}(x)}^2q(x)dx }.$$

\noindent 2)  Let  $\bm{v}:\pazocal{X}\to \mathbb{R}^d$ we claim:
\begin{eqnarray}
 \pazocal{S}(\nu_p,\nu_q)=\inf_{\bm{v}}\Big \{ \sqrt{\int_{\pazocal{X}}\nor{\bm{v}(x)}^2 q(x) dx} \text{ subject to}: p(x)-q(x)=- \text{div} (q(x) \bm{v}(x))   \Big \} (P)
 \label{eq:PrimalSobolev}
\end{eqnarray}
\noindent Set the Lagrangian, and assume $f|_{\partial \pazocal{X}}=0$
\begin{eqnarray*}
\pazocal{L}(\bm{v},f)&=&  \sqrt{\int_{\pazocal{X}} \nor{\bm{v}(x)}^2 q(x) dx }+ \int_{\pazocal{X}}f(x)(p(x)-q(x)+div(q \bm{v}(x))) dx\\
&=& \int_{\pazocal{X}} f(x)(p(x)-q(x))dx +  \sqrt{\int_{\pazocal{X}}\nor{\bm{v}(x)}^2 q(x) dx} - \int_{\pazocal{X}} \scalT{\bm{v}(x)}{\nabla_x f(x)}q(x)dx
\end{eqnarray*}
By the convexity of the problem we have that the primal formulation (P) is equal to:
$$\sup_{u}\inf_{\bm{v}} \pazocal{L}(\bm{v},f)= \sup_{u}\Big \{\int_{\pazocal{X}} f(x)(p(x)-q(x))dx +\inf_{\bm{v}}   \sqrt{\int_{\pazocal{X}} \nor{\bm{v}(x)}^2 q(x) dx} - \int_{\pazocal{X}} \scalT{\bm{v}(x)}{\nabla_x f(x)}q(x)dx\Big \}$$
Note that we have two cases:
$$\inf_{\bm{v}}  \sqrt{\int_{\pazocal{X}} \nor{\bm{v}(x)}^2 q(x) dx} - \int_{\pazocal{X}} \scalT{\bm{v}(x)}{\nabla_x f(x)}q(x)dx= -m \text{ if } \sqrt{\int_{\pazocal{X}} \nor{\nabla_x f(x)}^2q(x)dx }> 1 $$
and 
$$\inf_{\bm{v}} \sqrt{ \int_{\pazocal{X}} \nor{\bm{v}(x)}^2 q(x) dx} - \int_{\pazocal{X}} \scalT{\bm{v}(x)}{\nabla_x f(x)}q(x)dx= 0 \text{ if } \sqrt{\int_{\pazocal{X}} \nor{\nabla_x f(x)}^2q(x)dx }\leq 1.$$

Hence we have the primal equal to :
\begin{eqnarray*}
(P)&=& \sup_{f, \sqrt{\int_{\pazocal{X}} \nor{\nabla_x f(x)}^2q(x)dx } \leq 1,~ f|_{\partial \pazocal{X}}=0}\int_{\pazocal{X}} f(x)(p(x)-q(x))dx \\
&=&S(\nu_p,\nu_q)\\
&=& (D).
\end{eqnarray*}
Let $u$ the solution of \eqref{eq:PDEsolution} it follows that the solution $\bm{v}^*=\nabla_x u_{p,q}$, first it is feasible  and $ \sqrt{\int_{\pazocal{X}} \nor{\bm{v}^*(x)}^2 q(x) dx} =\nor{\nabla_x u_{p,q}}_{\mathcal{L}^{\otimes d}_2(\pazocal{X},\nu_q)}= \pazocal{S}(\nu_p,\nu_q)$.\\
\end{proof}

\begin{proof}[Proof of Proposition \ref{prop:BoundW2}]
For the upper bound $W_2(\nu_q,\nu_p)\leq 2  \pazocal{S}(\nu_p,\nu_q)$ the proof is given in Lemma 1 of \cite{peyre2016comparison} : $$\mathcal{W}_2(\nu_q,\nu_p)\leq 2  \nor{\nu_p-\nu_q}_{\dot{H}^{-1}(\nu_q)}=2\pazocal{S}(\nu_p,\nu_q).$$ 
For the lower  bound we adapt the proof of  \cite{santambrogio2015optimal} given for $\nor{.}_{\dot{H}^{-1}(\pazocal{X})}$. Let $f^*_{\nu_p,\nu_q}$ be the optimal Sobolev witness function we have $\nor{\nabla_x f^*_{\nu_p,\nu_q}}_{\mathcal{L}_2(\pazocal{X},\nu_q)}=1$, and $f^*_{\nu_p,\nu_q}|_{\partial \pazocal{X}}=0$. Let $(\mu_{q_t},\bm{v}_t)$ the solutions  of the dynamic formulation of Benamou and Bernier. 
Note that $\mu_{q_t}$ are absolutely continuous and their densities remain bounded from above and below by the same constants (see Prop 7.29 and Prop 7.30  Santambrogio book).

\begin{eqnarray*}
\pazocal{S}(\nu_p,\nu_q)&=&\int_{\pazocal{X}} f^*_{\nu_p,\nu_q}d(\nu_p-\nu_q)= \int_{0}^1 \frac{d}{dt}\left(\int_{\pazocal{X}} f^*_{\nu_p,\nu_q}(x)d\mu_{q_t}(x)\right) dt \\
&=& \int_{0}^1\int_{\pazocal{X}} \scalT{\nabla_x f^*_{\nu_p,\nu_q}(x)}{\bm{v}_t(x)}q_t(x) dxdt\\
&\leq& \left(\int_{0}^1\int_{\pazocal{X}}  \nor{\nabla_x f^*_{\nu_p,\nu_q}(x)}^2 q_t(x) dx dt \right)^{\frac{1}{2}}\left(\int_{0}^1\int_{\pazocal{X}}  \nor{ \bm{v}_t(x)}^2 q_t(x) dx dt \right)^{\frac{1}{2}}\\
&=& \left(\int_{0}^1\int_{\pazocal{X}}  \nor{\nabla_x f^*_{\nu_p,\nu_q}(x)}^2 q(x) \frac{q_t(x)}{q(x)} dx dt \right)^{\frac{1}{2}}{W}_2(\nu_q,\nu_p)\\
&\leq&  \sqrt{\frac{b}{a}} \left( \int_{\pazocal{X}}  \nor{\nabla_x f^*_{\nu_p,\nu_q}(x)}^2 q(x) dx \right)^{\frac{1}{2}}{W}_2(\nu_q,\nu_p)\\
&=& \sqrt{\frac{b}{a}}{W}_2(\nu_q,\nu_p).
\end{eqnarray*}
\end{proof}

\begin{proof} [Proof of Lemma \ref{lem:Ssquared}]Let $u_{p,q}$ be the solution of the following PDE:
\begin{equation}
p(x)-q(x)= -div(q(x)\nabla_x u_{p,q}(x)),
\label{eq:advec}
\end{equation}
with boundary condition $\scalT{\nabla_x u_{p,q}(x)}{n(x)}=0$ on $\partial \pazocal{X}$. We know that :
$$\pazocal{S}^2(\nu_p,\nu_q)= \int_{\pazocal{X}} \nor{\nabla_x u_{p,q}(x)}^2 q(x)dx.$$

\textbf{Step 1.} Setting $u=u_{p,q}$. Let us first show that $$L(u_{p,q})=2\int_{\pazocal{X}} u_{p,q}(x) (p(x)-q(x))dx-\int_{\pazocal{X}}\nor{\nabla_x u_{p,q}(x)}^2 q(x)dx= \int_{\pazocal{X}} \nor{\nabla_x u_{p,q}(x)}^2 q(x)dx.$$
\begin{align*}
L(u_{p,q})&= -2\int_{\pazocal{X}}u_{p,q}(x)div(q(x)\nabla_x u_{p,q}(x))-\int_{\pazocal{X}}\nor{\nabla_x u_{p,q}(x)}^2 q(x)dx \text{ (Using  Equation } \eqref{eq:advec}) \\
&=-2\left(\int_{\pazocal{X}}-\nor{\nabla_x u_{p,q}(x)}^2q(x)dx \right)-\int_{\pazocal{X}}\nor{\nabla_x u_{p,q}(x)}^2 q(x)dx \text{ ( Divergence Theorem )}\\
&= \int_{\pazocal{X}}\nor{\nabla_x u_{p,q}(x)}^2 q(x)dx\\
&=\pazocal{S}^2(\nu_p,\nu_q).
\end{align*}

\textbf{Step 2.} Let us show that $u_{p,q}$ is an optimizer of this loss. Meaning that for all $u$ we have $L(u)\leq L(u_{p,q})$.
\begin{align*}
L(u)&=2\int_{\pazocal{X}} u(x) (p(x)-q(x))dx-\int_{\pazocal{X}}\nor{\nabla_x u(x)}^2 q(x)dx\\
&= -2\int_{\pazocal{X}} u(x)div(q(x)\nabla_x u_{p,q} )-\int_{\pazocal{X}}\nor{\nabla_x u(x)}^2 q(x)dx  \text{ (Using  Equation } \eqref{eq:advec}) \\
&= 2\int_{\pazocal{X}}\scalT{\nabla_x u(x)}{u_{p,q}(x)}q(x)dx-\int_{\pazocal{X}}\nor{\nabla_x u(x)}^2 q(x)dx \text{ ( Divergence Theorem )}\\
&=  2\int_{\pazocal{X}}\scalT{\nabla_x u(x)}{u_{p,q}(x)}q(x)dx-\int_{\pazocal{X}}\nor{\nabla_x u(x)}^2 q(x)dx-L(u_{p,q})+L(u_{p,q})\\ 
&= L(u_{p,q})+ \int_{\pazocal{X}}\left(2\scalT{\nabla_x u(x)}{u_{p,q}(x)}-\nor{\nabla_x u(x)}^2 -\nor{\nabla_x u_{p,q}(x)}^2\right)q(x)dx\\
&= L(u_{p,q}) -\int_{\pazocal{X}}\nor{\nabla_x u(x)-\nabla_x u_{p,q}(x)}^2q(x)dx\\
&\leq L(u_{p,q})=\pazocal{S}^2(\nu_p,\nu_q).
\end{align*}
with equality when $u=u_{p,q}$.
\end{proof}

\section{Proofs: Kernel Sobolev Discrepancy}
\begin{proof}[Proof of Proposition \ref{prop:SolKernel}]

It is easy to see that for $f\in \mathcal{H}$,
$$\mathbb{E}_{x\sim\nu_p}f(x)-\mathbb{E}_{x\sim \nu_q}f(x)=\scalT{\boldsymbol{f}}{\mu(\nu_p)-\mu(\nu_q)},$$
where $\mu(\nu_p)$ and $\mu(\nu_q)$ are the KME of $\nu_p$ and $\nu_{q}$.
On the other hand:
\begin{align*}
\mathbb{E}_{x\sim \nu_q} \nor{\nabla_xf(x)}^2=\scalT{\boldsymbol{f}}{D(\nu_q)\boldsymbol{f}}
\end{align*}
where $D(\nu_q)$ is the KDGE of $\nu_q$ (as defined in Equation \eqref{eq:KDGE}).
Hence we have under the assumption that $D(\nu_q)$ is invertible:
\begin{align*} 
\pazocal{S}_{\mathcal{H}}(\nu_p,\nu_q)&=\sup_{f\in \mathcal{H}, \scalT{f}{D(\nu_q)f}_{\mathcal{H}}\leq 1}\scalT{f}{\mu(\nu_p)-\mu(\nu_q)}\\
&=\sup_{\boldsymbol{g} \nor{\boldsymbol{g}}\leq 1} \scalT{\boldsymbol{g}}{D^{-\frac{1}{2}}(\nu_q)\left(\mu(\nu_p)-\mu(\nu_q)\right)}\\
&=\nor{D^{-\frac{1}{2}}(\nu_q)\left(\mu(\nu_p)-\mu(\nu_q)\right)},
\end{align*}
and 
$$\boldsymbol{f}^{\mathcal{H}}_{\nu_p,\nu_q}=\frac{1}{\pazocal{S}_{\mathcal{H}}(\nu_p,\nu_q)  }\left[D(\nu_{q})\right]^{-1} \left(\mu(\nu_p)-\mu(\nu_q)\right),$$
and $\boldsymbol{u}^{\mathcal{H}}_{p,q}=\left[D(\nu_{q})\right]^{-1} \left(\mu(\nu_p)-\mu(\nu_q)\right)$.

\end{proof}

\begin{proof} [Proof of Proposition \ref{prop:dense}] We have: $\pazocal{S}_{\mathcal{H}}(\nu_p,\nu_q)\leq \pazocal{S}(\nu_p,\nu_q)$
%From Lemma \ref{lem:Ssquared}((Equation \eqref{eq:comparaisonequality})) we have:
%$$ \pazocal{S}^2(\nu_p,\nu_q)- \pazocal{S}^2_{\mathcal{H}}(\nu_p,\nu_q)=\int_{\pazocal{X}}\nor{\nabla_x u^{\mathcal{H}}_{p,q}(x)-\nabla_x u_{p,q}(x)}^2q(x)dx,
%$$
%\noindent If $\mathcal{H}$ is dense in $W^{1,2}_0(\pazocal{X},\nu_q)$ we have therfore:
% $$\pazocal{S}_{\mathcal{H}}(\nu_p,\nu_q)=\pazocal{S}(\nu_p,\nu_q).$$
%The rest of the proof follows from Proposition \eqref{prop:BoundW2}. if $S_{\mathcal{H}}(\nu_p,\nu_q)=\pazocal{S}(\nu_p,\nu_q)=0$ this implies that $W_2(\nu_q,\nu_p)=0$ and hence $\nu_p=\nu_{q}$.
%%On the other hand we have:
%%$$\mathbb{E}_{x\sim \nu_p} \tilde{f}_{\mathcal{H}}(x)-\mathbb{E}_{x\sim \nu_q} \tilde{f}_{\mathcal{H}}(x)=\scalT{ \tilde{f}_{\mathcal{H}}}{\mu(\nu_p)-\mu(\nu_q)}_{\mathcal{H}}=\nor{[D(\nu(q)]^{-\frac{1}{2}}(\mu(\nu_p)- \mu(\nu_q))}^2=\pazocal{S}^2_{\mathcal{H}}(\nu_p,\nu_q)$$
\end{proof}

\section{Proofs:Regularized Kernel Sobolev Discrepancy}
%
%\begin{lemma}
%\begin{equation}
%\pazocal{S}_{\mathcal{H},\lambda}(\nu_p,\nu_q)=\sup_{f \in \mathcal{H}, \mathbb{E}_{x\sim \nu_q}\nor{\nabla_x f(x)}^2+\lambda \nor{f}^2_{\mathcal{H}} \leq 1} \left\{\mathbb{E}_{x\sim \nu_p}f(x)-\mathbb{E}_{x\sim \nu_q}f(x)\right\}
%\label{eq:ipmSob}
%\end{equation}
%
%\begin{eqnarray}
%\pazocal{S}^2_{\mathcal{H},\lambda}(\nu_p,\nu_q)&=&\sup_{u \in \mathcal{H}} \left\{2\int_{\pazocal{X}} u(x) d(\nu_p(x)-\nu_q(x))-\int_{\pazocal{X}}\nor{\nabla_x u(x)}^2 d\nu_q(x)-\lambda \nor{u}^2_{\mathcal{H}}\right\}\nonumber \\
%&=&\sup_{u \in \mathcal{H}} 2\scalT{u}{\mu(\nu_p)-\mu(\nu_q)}_{\mathcal{H}}-\scalT{u}{(D(\nu_q)+\lambda I) u}_{\mathcal{H}}
%\label{eq:variationalEq}
%\end{eqnarray}
%\end{lemma}

\begin{proof}[Proof of Theorem \ref{pro:RegSobWit}] Let $\boldsymbol{u}^{\lambda}_{p,q} \in \mathcal{H}$ be the solution of :
$$(D(\nu_q)+\lambda I )\boldsymbol{u}^{\lambda}_{p,q}=\mu(\nu_{p})-\mu(\nu_q)$$
We know that the solution of \eqref{eq:ipmSobO} satisfies (for a proof it is similar to \ref{prop:SolKernel} just adding the regularization $\lambda>0$):
\begin{eqnarray*}
\pazocal{S}^2_{\mathcal{H},\lambda}(\nu_p,\nu_q)&=& \scalT{\mu(\nu_{p})-\mu(\nu_q)}{(D(\nu_q)+\lambda I)^{-1}(\mu(\nu_{p})-\mu(\nu_q))}_{\mathcal{H}}\\
&=&\scalT{\boldsymbol{u}^{\lambda}_{p,q}}{D(\nu_q)\boldsymbol{u}^{\lambda}_{p,q}} +\lambda \nor{\boldsymbol{u}^{\lambda}_{p,q}}^2\\
&=& \int_{\pazocal{X}}\nor{\nabla_x {u}^{\lambda}_{p,q}(x)}^2q(x)dx +\lambda \nor{\boldsymbol{u}^{\lambda}_{p,q}}^2.
\end{eqnarray*}
Let $$L(u,\lambda)=2\scalT{\boldsymbol{u}}{\mu(\nu_p)-\mu(\nu_q)}-\scalT{\boldsymbol{u}}{(D(\nu_q)+\lambda I) \boldsymbol{u}}$$
\textbf{Step 1.} 

\begin{align*}
L({u}^{\lambda}_{p,q},\lambda)&=2\scalT{\boldsymbol{u}^{\lambda}_{p,q}}{\mu(\nu_p)-\mu(\nu_q)}_{}-\scalT{\boldsymbol{u}^{\lambda}_{p,q}}{(D(\nu_q)+\lambda I) \boldsymbol{u}^{\lambda}_{p,q}} \\
&=2\scalT{\boldsymbol{u}^{\lambda}_{p,q}}{(D(\nu_q)+\lambda I)\boldsymbol{u}^{\lambda}_{p,q}}-\scalT{\boldsymbol{u}^{\lambda}_{p,q}}{(D(\nu_q)+\lambda I) \boldsymbol{u}^{\lambda}_{p,q}}\\
&= \scalT{\boldsymbol{u}^{\lambda}_{p,q}}{(D(\nu_q)+\lambda I) \boldsymbol{u}^{\lambda}_{p,q}}\\
&= \pazocal{S}^2_{\mathcal{H},\lambda}(\nu_p,\nu_q).
\end{align*}

\textbf{Step 2.} Let us show that $\boldsymbol{u}^{\lambda}_{p,q}$ is an optimizer of this loss. Meaning that for all $u$ we have $L(u,\lambda)\leq L({u}^{\lambda}_{p,q},\lambda)$.
\begin{align*}
L(u,\lambda)- L({u}^{\lambda}_{p,q},\lambda)&=2\scalT{\boldsymbol{u}}{\mu(\nu_p)-\mu(\nu_q)}-\scalT{\boldsymbol{u}}{(D(\nu_q)+\lambda I) \boldsymbol{u}}-\scalT{\boldsymbol{u}^{\lambda}_{p,q}}{(D(\nu_q)+\lambda I) \boldsymbol{u}^{\lambda}_{p,q}}\\
&=2\scalT{\boldsymbol{u}}{(D(\nu_q)+\lambda I )\boldsymbol{u}^{\lambda}_{p,q}} -\scalT{\boldsymbol{u}}{(D(\nu_q)+\lambda I) \boldsymbol{u}}-\scalT{\boldsymbol{u}^{\lambda}_{p,q}}{(D(\nu_q)+\lambda I) \boldsymbol{u}^{\lambda}_{p,q}}\\
&=- \scalT{\boldsymbol{u}- \boldsymbol{u}^{\lambda}_{p,q}}{(D(\nu_q)+\lambda I )(\boldsymbol{u}-\boldsymbol{u}^{\lambda}_{p,q})}\\
&=-\scalT{\boldsymbol{u}- \boldsymbol{u}^{\lambda}_{p,q}}{D(\nu_q)(\boldsymbol{u}-\boldsymbol{u}^{\lambda}_{p,q})}-\lambda \nor{\boldsymbol{u}-\boldsymbol{u}^{\lambda}_{p,q}}^2\\
&= -\int_{\pazocal{X}}\nor{\nabla_x u(x)-\nabla_x {u}^{\lambda}_{p,q}(x)}^2q(x)dx-\lambda \nor{\boldsymbol{u}-\boldsymbol{u}^{\lambda}_{p,q}}^2\\
&=-\nor{(D(\nu_q))^{\frac{1}{2}}(\boldsymbol{u}-\boldsymbol{u}^{\lambda}_{p,q}) }^2-\lambda \nor{\boldsymbol{u}-\boldsymbol{u}^{\lambda}_{p,q}}^2\\
&\leq 0.
\end{align*}
with equality when $u=\boldsymbol{u}^{\lambda}_{p,q}$.
1) to 5) follow immediately from the proof above.
\end{proof}

\section{Convergence}

\begin{proof}[Proof of Lemma \ref{lem:compa}]
1) We have from Lemma \ref{lem:Ssquared} (Equation \eqref{eq:comparaisonequality}):
\begin{align*} 0\leq \pazocal{S}^2(\nu_p,\nu_q)- \pazocal{S}^2_{\mathcal{H}}(\nu_p,\nu_q)&=\int_{\pazocal{X}}\nor{\nabla_x u^{\mathcal{H}}_{p,q}(x)-\nabla_x u_{p,q}(x)}^2q(x)dx\\
&\leq  \inf_{u\in \mathcal{H}} \int_{\pazocal{X}}\nor{\nabla_x u(x)-\nabla_x u_{p,q}(x)}^2q(x)dx
\end{align*}

2) 
Let $$L(u,\lambda)= 2\scalT{\boldsymbol{u}}{\mu(\nu_p)-\mu(\nu_q)}-\scalT{\boldsymbol{u}}{(D(\nu_q)+\lambda I) \boldsymbol{u}}$$ 
and 
$$\hat{L}(u,\lambda)= 2\scalT{\boldsymbol{u}}{\hat{\mu}(\nu_p)-\hat{\mu}(\nu_q)}-\scalT{\boldsymbol{u}}{(\hat{D}(\nu_q)+\lambda I) \boldsymbol{u}}.$$
Note $\delta=\mu(\nu_p)-\mu(\nu_q)$ and $\hat{\delta}=\hat{\mu}(\nu_p)-\hat{\mu}(\nu_q)$.\\
From Theorem \ref{pro:RegSobWit} point 4) we have: For any $\lambda>0$, and any $u \in \mathcal{H}$:
  $$L({u}^{\lambda}_{p,q},\lambda)-L(u,\lambda) = \nor{(D(\nu_q))^{\frac{1}{2}}(\boldsymbol{u}-\boldsymbol{u}^{\lambda}_{p,q}) }^2+\lambda \nor{\boldsymbol{u}-\boldsymbol{u}^{\lambda}_{p,q}}^2$$
In particular for the unregularized case $\lambda_0=0$ we have $L(u^{\lambda_0}_{p,q},\lambda_0)= L(u^{\mathcal{H}}_{p,q},0)=\pazocal{S}^2_{\mathcal{H}}(\nu_p,\nu_q)$.
Hence for this particular case we have for any $u \in \mathcal{H}$

$$\pazocal{S}^2_{\mathcal{H}}(\nu_p,\nu_q)-L(u,0) = L(u^{\mathcal{H}}_{p,q},0)-L(u,0)=\nor{(D(\nu_q))^{\frac{1}{2}}(\boldsymbol{u}-\boldsymbol{u}^{\lambda}_{p,q}) }^2 $$
   \begin{align*}
  \hat{\pazocal{S}}^2_{\mathcal{H},\lambda}(\hat{\nu}_p,\hat{\nu}_q)-\pazocal{S}^2_{\mathcal{H}}(\nu_p,\nu_q)&= \hat{L}(\hat{u}^{\lambda}_{p,q},\lambda)-L(\hat{u}^{\lambda}_{p,q},\lambda)+L(\hat{u}^{\lambda}_{p,q},\lambda)-L({u}^{\mathcal{H}}_{p,q},0)\\
  &=2\scalT{\boldsymbol{\hat{u}}^{\lambda}_{p,q}}{\hat{\delta}-\delta}-\scalT{\boldsymbol{\hat{u}}^{\lambda}_{p,q}}{(\hat{D}(\nu_q)-D(\nu_q))\boldsymbol{\hat{u}}^{\lambda}_{p,q}}\\&
  + L(\hat{u}^{\lambda}_{p,q},0)+\lambda \nor{\boldsymbol{\hat{u}}^{\lambda}_{p,q}}^2-L({u}^{\mathcal{H}}_{p,q},0)\\
  &=2\scalT{\boldsymbol{\hat{u}}^{\lambda}_{p,q}}{\hat{\delta}-\delta}-\scalT{\boldsymbol{\hat{u}}^{\lambda}_{p,q}}{(\hat{D}(\nu_q)-D(\nu_q))\boldsymbol{\hat{u}}^{\lambda}_{p,q}}+\lambda \nor{\hat{u}^{\lambda}_{p,q}}^2 \\
  &- \nor{\sqrt{D(\nu(q))}(\boldsymbol{\hat{u}}^{\lambda}_{p,q}-\boldsymbol{u}^{\mathcal{H}}_{p,q})}^2
\end{align*}
\begin{align*}
| \hat{\pazocal{S}}^2_{\mathcal{H},\lambda}(\hat{\nu}_p,\hat{\nu}_q)-\pazocal{S}^2_{\mathcal{H}}(\nu_p,\nu_q)| &\leq \nor{\delta-\hat{\delta}}\nor{\boldsymbol{\hat{u}}^{\lambda}_{p,q}}+(1+\lambda )\nor{\boldsymbol{\hat{u}}^{\lambda}_{p,q}}^2\nor{D(\nu_q)-\hat{D}(\nu_q)}_{op}\\
&+  \nor{\sqrt{D(\nu(q))}(\boldsymbol{\hat{u}}^{\lambda}_{p,q}-\boldsymbol{u}^{\mathcal{H}}_{p,q})}^2
\end{align*}
\end{proof}

\end{document}